\documentclass[journal]{IEEEtran}

\usepackage{cite}          % needed to automatically sort the references
\usepackage{graphicx}      % for including graphics
\usepackage{cite}
\usepackage{subfig}
\usepackage{amsmath,amssymb,amsfonts}
\usepackage{array}
\usepackage{url}
\usepackage{booktabs}
\usepackage{subfig}
\usepackage{multirow}
\usepackage{epstopdf}
\usepackage{caption}
\usepackage{color}
\usepackage{comment}
\usepackage{xspace}
\usepackage{bm}
\usepackage{multirow}
\usepackage{inputenc}
\usepackage{enumerate} 
\usepackage{textcomp}
\usepackage{listings}
\usepackage{array}
\usepackage{tikz}
\usetikzlibrary{shapes.geometric, arrows}
\usepackage{dirtytalk}
\usepackage[switch,mathlines]{lineno}
\usepackage{soul}
\usepackage{xfrac}
\usepackage{bbm}
\usepackage{breqn}
\usepackage{graphicx}
\usepackage{tikz}
\usepackage{stfloats} 
\usetikzlibrary{
  positioning,   % for “below=… of …” syntax
  arrows,        % arrow tips like -latex
  shapes,        % block/node shapes
  fit            % dashed “loop” container
}

\usepackage{subfig}
\usepackage{float}
\usepackage{amsthm}
\newtheorem{theorem}{Theorem}
\newtheorem{lemma}{Lemma}

\theoremstyle{remark}
\newtheorem{remark}{Remark}

\usepackage{algorithm,algpseudocode}
\algnewcommand{\Inputs}[1]{%
  \State \textbf{Inputs:}
  \Statex \hspace*{\algorithmicindent}\parbox[t]{.8\linewidth}{\raggedright #1}
}
\algnewcommand{\Initialize}[1]{%
  \State \textbf{Initialize:}
  \Statex \hspace*{\algorithmicindent}\parbox[t]{.8\linewidth}{\raggedright #1}
}
\usepackage{algpseudocode}

\usepackage{cancel}
\usepackage{cite}
\usepackage{bm}
\usepackage{subfig}
\usepackage{float} 

\begin{document}

\title{Game-Theoretic Reinforcement Learning based Topology-Agnostic Defense for Virtual Power Plants}

\author{Pooja Algikar,~\IEEEmembership{Member,~IEEE}, Stefano Fenu, Kumarsinh Jhala,~\IEEEmembership{Senior Member,~IEEE,}
        and~Siby Plathottam,~\IEEEmembership{Member,~IEEE}%
\thanks{%Manuscript received xxxx xx, 2025; revised xxxx xx, 2025; accepted xxxx xx, 2025. 
%Date of publication xxxx xx, 2025; date of current version xxxx xx, 2025. 
This work was supported by \textit{(DOE Office of Cybersecurity, Energy Security, and Emergency Response (CESER))}}% 
\thanks{Authors are with the energy systems and infrastructure assessment division at Argonne National Laboratory}%
\thanks{Corresponding author: Pooja Algikar (e-mail: palgikar@anl.gov).}%
\thanks{Accepted manuscript of an article published in \emph{IEEE Transactions on Smart Grid}, Early Access, 8 October 2026, doi: 10.1109/TSG.2026.3741535. \textcopyright~2026 IEEE. Personal use of this material is permitted. Permission from IEEE must be obtained for all other uses, in any current or future media, including reprinting/republishing this material for advertising or promotional purposes, creating new collective works, for resale or redistribution to servers or lists, or reuse of any copyrighted component of this work in other works.}%
}

\markboth{IEEE Transactions on Smart Grid, Early Access, 2026}%
{Author \MakeLowercase{\textit{ Algikar et al.}}: Game-Theoretic Reinforcement Learning for Cybersecurity of VPPs}
\maketitle
\begin{abstract}

\textcolor{black} {
The rapid proliferation of distributed energy resources (DERs) aggregated by virtual power plants (VPPs) elevates cybersecurity risk by expanding the attack surface. A coordinated cyberattack has the potential to disrupt distribution-system, bulk-system, and market operations. Existing cyber-physical defense strategies largely assume access to grid network models and operating conditions, which may not be available to third-party aggregators. This paper proposes a novel hybrid, perturbation-based cyber-physical defense for VPP-operated DERs that blends game-theoretic planning with model-free reinforcement learning (RL) under a Bayesian belief-update framework. Topology agnosticism is defined from the VPP information and implementation perspective: the VPP does not require access to the feeder topology, line parameters, switching configuration, or full system state. This work advances the state of the art through three contributions: (1) a multi-output Gaussian process with Hamiltonian Monte Carlo for discrete attacker-type inference, probability of detection, and confidence of detection for modeling the belief distribution; (2) moment-matched Gaussian propagation of posterior beliefs across episodes; and (3) an RL policy that adaptively integrates Stackelberg leader (VPP) actions with exploration. Furthermore, we establish equilibrium existence and derive an episode-level closed-loop utility bound connecting attacker-belief uncertainty with GT--RL action alignment. Evaluations on IEEE 13- and 123-Bus feeders with DER portfolios demonstrate effective detection under attacker-model uncertainty and evaluated feeder-reconfiguration conditions, while requiring no explicit feeder-topology knowledge. Overall, the results indicate that combining bi-level game-theoretic planning, nonparametric Bayesian inference, and adaptive RL is a promising direction for VPP cybersecurity.
}

%We propose a hybrid defense algorithm for distributed energy resources and virtual power plants (VPPs) that blends exact game-theoretic planning with model-free reinforcement learning (RL) under a Bayesian belief-update framework. We highlight three contributions: (1) a multi-output Gaussian process (GP) with Hamiltonian Monte Carlo (HMC) for discrete attacker-type inference, probability of detection, and confidence of detection; (2) moment-matched Gaussian propagation of posterior beliefs across episodes; (3) an RL policy that adaptively integrates exact Stackelberg leader (VPP) actions with exploration. Furthermore, we derive the theoretical regret bounds on defender's utility due to mis-specification of attacker's profile. We demonstrate the efficacy of the algorithm in the IEEE 13- and IEEE 123-bus system. 
% {Experiments on a realistic DER/VPP set-up demonstrate faster adaptation to novel attack strategies, improved sample efficiency, and enhanced robustness compared to end-to-end learned Stackelberg and Dirichlet-based baselines. Our results underscore the value of combining exact bilevel GT solvers, nonparametric Bayesian inference, and adaptive policy blending for cyber-physical security of virtual power plants.}
\end{abstract}
\begin{IEEEkeywords}
Bayesian Stackelberg Game, Reinforcement Learning, Cyber-physical security, Distributed Energy Resources.
\end{IEEEkeywords}
\vspace{-5mm}
\section*{Nomenclature}
\addcontentsline{toc}{section}{Nomenclature}
\begin{IEEEdescription}[\IEEEusemathlabelsep\IEEEsetlabelwidth{$V_1,V_2,V_3$}]
\item[$\bm{\theta}$] Attacker's profile
\item[$a$] Aggressiveness parameter in $\bm{\theta}$
\item[$s$] Stealth parameter in the $\bm{\theta}$
\item[$\bm{\theta}_{\text{type}}$] Type of the attack parameter in $\bm{\theta}$ set to $\{1 ,2, 3\}$ for the FDI, MitM, and no attack, respectively. 
\item[$\beta(\bm{\theta})$] belief distribution of the defender over attacker's profile 
\item[$\bm{a}_{A}^{d}(t)$] Attacker's action space for the $d$-th DER 
at time $t$
\item[$V_{A}^{d}$] Malicious data injected by the attacker in the case of FDI attack
\item[$P_{A}^{d}$] Change in the set point of the active power of the $d$-th DER induced by the attacker in the case of MitM attack
\item[$ Q_{A}^{d}$] Change in the set point of the reactive power and reactive power of the $d$-th DER induced by the attacker in the case of MitM attack
\item[$t_{A}^{d}$] timestamp of the attack values for the $d$-th DER.
\item[$\Delta V_{P}^{d}$] VP signal introduced for the $d$-th DER as a result of defender's perturbations. 
\item[$\Delta P_{D}^{d}, \Delta Q_{D}^{d}$] Magnitude of the perturbations signals produced in the active and reactive power, respectively, by the defender for the $d$-th DER. 
% \item[$t_{D}^{d}$] Timestamp of the perturbation signals by the defender for the $d-$th DER. 
\item[$V_{o}^{d}$] Voltage observations for the $d$-th DER 
\item[$\bm{a}_{RL}^{d}$] Action space of the RL agent
% \item[$\Delta P_{RL}^{d}$] Active power perturbation signals obtained by RL agent for $d-$th DER. 
% \item[$\Delta Q_{RL}^{d}$] Reactive power perturbation signals obtained by RL agent for $d-$th DER.
\item[$t_{RL}^{d}$] Timestamps of the perturbation signals obtained by RL agent for $d$-th DER. 
\item[$\bm{z}(t)$] State vector of the RL agent at time $t$ 
% \item[$r_t(\phi)$] reward function of the RL agent with internal parameters $\phi$
\item[$A$] advantage of the reward
% \item[$\varepsilon$] clipping parameter in the proximal policy update likelihood
% \item[$U_{D}$] Utility function of the defender in the GT module 
% \item[$U_{A}$] Utility function of the attacker in the GT module
% \item[$I$] impact of the attack/ defense mechanism
% \item[$p_{d}$] probability of detection in the attack or defender's utility 
% \item[$C_{\mathrm{det}}$] instantaneous confidence of detection (from the detector)
% \item[$P_{\mathrm{det}}$] Probability of detection (from the detector)
% \item[$\theta_{\mathrm{det}}$] Label of attack (from the detector)
% \item[$\tau$] index of the episode in GT-RL algorithm
% \item[$T$] Duration of an episode in GT-RL algorithm  
\item[$\bm{o}$] observations on confidence of detection $C_{D}$, probability of detection $P_{D}$, attack label $\theta_{type}$ obtained from the detector module
\item[$k(\cdot,\cdot)$] kernel function   

\item[$l$] characteristic length parameter in the kernel function  
\item[$x_{p},x_{q}$] coefficients for the regression for estimating the VP signal
\item[$e(t)$] i.i.d. noise of the regression model for the VP signal 
\item[$\rho^{d}$] vigilance factor of the $d$-th DER 
% \item[$\gamma$] threshold of detection set for the detector
\item[$P_{\mathrm{FA}}$] probability of false alarm set for the detector
% \item[$\mathcal{T}$] test statistic 
\end{IEEEdescription}

%\section*{Notations}
%Throughout the paper, we used italic bold letters to represent a vector and italic unbold letters to represent a variable.
%$|{A}|$ denotes the magnitude of ${A}$.

\section{Introduction} Grid modernization via distributed energy resources (DERs), microgrids, third‐party aggregators, and virtual power plants (VPPs) enhances flexibility but expands the cybersecurity attack surface. Consumer-owned DERs managed by VPPs form complex communication and control pathways. Many of these assets are small and fall outside traditional utility standards, such as the North American Electric Reliability Corporation Critical Infrastructure Protection (NERC CIP) requirements; yet, in aggregate, they can pose significant risks to the bulk power system. As VPP‐operated DERs proliferate, breaches risk VPP operations, distribution stability, and market integrity \cite{CMG+19}. The National Electric Sector Cybersecurity Organization Resource (NESCOR) has cataloged a broad spectrum of cyber threats targeting DER ecosystems \cite{doe2022dercyber,roy2024defense}. These vulnerabilities span from aggregators and communication links to control interfaces, creating multiple entry points for adversaries. Exploiting these weak spots, attackers can launch \emph{false data injection} (FDI) and \emph{man-in-the-middle} (MitM) attacks that lead to financial loss, degraded power quality, equipment damage, or even system instability \cite{YLF20,SAC+23}. While FDI attacks compromise the integrity of sensor measurements and control commands, MitM attacks intercept and manipulate the communication between VPPs and DERs, undermining trust and coordination across the distributed grid.

Modern smart grids now encompass a diverse ecosystem of stakeholders beyond traditional distribution utilities, including DER aggregators, demand response operators, electric vehicle (EV) charging coordinators, and VPPs. 
{
In broader industrial control settings, context-aware fusion of heterogeneous physical and communication-layer observations has also been investigated to improve semantic understanding of system behavior~\cite{hetu2025casi}. Such approaches demonstrate the value of combining complementary cyber and physical evidence; however, they generally assume access to sensing modalities and communication information that may not be available to a third-party VPP operator.}
\textcolor{black} {This growing heterogeneity introduces a wide range of devices, operational constraints, and irregular data streams, creating security challenges that utility-centric defenses alone cannot address. VPPs, in particular, operate with limited visibility into the underlying distribution-grid topology and state, which weakens conventional model-based defenses~\cite{NISTIR7628}. We therefore propose a topology-agnostic perturbation-based defense, where topology agnosticism is defined from the VPP information and implementation perspective: the method does not receive the feeder topology or solve an online power-flow model~\cite{yin2024advancing,shrestha2024anomaly}.}

Attackers are often \emph{strategic}, adapting their behavior to exploit predictable defenses. Static RL policies, once learned, can therefore be evaded through subtle or intermittent attack patterns \cite{saber2025model}, highlighting the need for \emph{adaptive} defense mechanisms. Combining game theory with RL offers such adaptivity: game‐theoretic formulations capture the dynamic interplay between attacker and defender, optimizing monitoring and resource allocation under uncertainty, while RL equips the defender to autonomously learn effective detection and mitigation strategies in complex, partially known cyber‐physical systems.
\vspace{-3mm}
\subsection{Related Work} 
{
Beyond conventional cyberattacks on grid measurements and control channels, the security of the learning models used within cyber-physical defense systems has also
received increasing attention. Recent work has examined physics-constrained robustness and data-security guarantees for machine-learning models deployed in low-carbon smart
grids~\cite{zhang2023data}, while broader surveys have documented adversarial
vulnerabilities of ML-based applications across IoT-enabled generation, transmission,
distribution, and consumption systems~\cite{zhang2024vulnerability}. These studies
highlight that learned detectors and controllers may themselves be exposed to
adversarially perturbed measurements, motivating defense mechanisms that combine
physical consistency checks with adaptive decision-making.}

There is growing interest in smart‐grid cybersecurity using game theory and RL \cite{roberts2020deep,hasan2020game,lakshminarayana2021moving,hausken2009minmax,yao2025game}. Hasan et al. \cite{hasan2020game} cast static/dynamic interactions between the attacker and the defender as two‐player budget‐constrained games over substations, deriving polynomial‐time algorithms for worst‐case attacks and optimal protection under uniform costs; their combinatorial approach allocates protection budgets but lacks online detection, attacker‐type uncertainty, and continuous control. Roberts et al. \cite{roberts2020deep} and Selim et al. \cite{selim2023adaptive} train model‐free deep RL agents that map grid states to DER setpoint adjustments to mitigate voltage violations and load shedding, treating attacks as stochastic disturbances without explicit attacker models or detectors. Lakshminarayana et al. \cite{lakshminarayana2021moving} propose moving‐target defense via D‐FACTS impedance perturbations, computing a Nash equilibrium for line perturbations; the static/periodic game focuses on network randomization rather than detection or adaptive learning.

VPP‐specific cyber‐defense remains scarce, especially given a VPP’s limited visibility beyond its fleet. 
% Our method embeds deep RL within the defender’s control loop: the RL agent learns continuous, real‐time perturbations inside a Bayesian Stackelberg action envelope. 
Ni and Paul \cite{ni2019multistage} study a multistage Markov game on transmission lines with off‐policy Q‐learning to minimize cumulative damage. Their approach differs from ours in several key aspects: (i) it models simultaneous attacker and defender actions over a fixed set of transmission lines with explicit state tracking of breached or secured assets; (ii) it assumes all attacks are observable, without accounting for stealthy perturbations or alarm thresholds; and (iii) its reinforcement learning component approximates the equilibrium offline, without belief updates or online adaptation beyond the pre-trained policy. Yao et al. \cite{yao2025game} compute time‐varying defense via Bayesian Attack Graphs that score plant vulnerabilities and a two‐player stochastic game for resource allocation; a bespoke RL algorithm learns the Nash equilibrium for which plants to protect at each stage.

% Earlier perturbation-based detection methods established the foundation for using DER inverters to generate small, intentional voltage perturbations for verifying measurement authenticity. \cite{JPC+21,JPN21} formulated this concept analytically using voltage sensitivity analysis to identify optimal DER locations and magnitudes for injecting low-magnitude perturbations. The detection method was rule-based: the presence or absence of the known perturbation pattern in sensor data indicated whether a measurement was authentic. \cite{FJS+25} extended this idea by introducing an RL agent to automate and adapt the perturbation process. Instead of precomputing DER locations analytically, the RL agent learned which DERs to perturb and by how much, optimizing detection effectiveness over time. However, this RL formulation was run in a fixed environment without explicitly modeling uncertainty about attacker's profile. 
% Early perturbation-based methods used DER inverters to inject small, intentional voltage signals and verify measurement authenticity via rule-based detection. Analytical formulations in earlier methods \cite{JPC+21,JPN21} identified optimal DER locations and perturbation magnitudes using voltage sensitivities, with detection based on whether the known pattern appeared in sensor data. This idea was later extended in \cite{FJS+25}, where an RL agent learned when and how to perturb DERs, removing the need for precomputed analytical placement. However, this RL approach assumed a fixed environment and did not model uncertainty in the attacker’s behavior. 

{
The proposed perturbation detector follows an active challenge–response principle: the VPP injects a small secret perturbation into selected DER setpoints and tests whether the induced voltage footprint appears in observer measurements. This idea is closely related to watermarking-based attack detection in control systems \cite{khazraei2017new,porter2020detecting,bessa2022dual,zhang2023watermarking}, where intentional excitation (known only to the defender) increases detectability of stealthy false-data attacks, and to moving-target defense in power grids \cite{piaquadio2025pulse,cavraro2019inverter} that uses controlled perturbations to reduce attacker stealth. In distribution systems settings, prior work has used voltage-sensitivity models to design such probing perturbations and rule-based detectors \cite{JPC+21,JPN21}, and has recently explored RL-driven perturbation selection \cite{FJS+25}.  These prior approaches employ RL, but they typically assume (i) full access to the distribution network topology \cite{cavraro2019inverter} and (ii) ground-truth detection labels to shape the reward and improve the policy. In VPP settings, neither assumption reliably holds: a VPP often observes only the DERs in its own fleet rather than the full feeder topology, and distribution topology can change frequently with ongoing electrification (e.g., new DER interconnections, switching operations, and reconfiguration). Our contribution differs by integrating attacker-type uncertainty via a Bayesian Stackelberg game layer that shapes the RL perturbation policy and updates beliefs online from detector outputs.}

In contrast, our framework takes a model-agnostic view. {Our contribution differs by integrating attacker-type uncertainty via a Bayesian Stackelberg game layer that shapes the RL perturbation policy and updates beliefs online from detector outputs.} We introduce small, secret perturbations into the power outputs of selected DERs and observe how these perturbations propagate through voltage measurements of other DERs. Under normal operation, these voltage measurements follow predictable patterns; deviations from expected behavior reveal anomalies that may indicate cyberattacks. To enhance detection and adaptivity, we couple this perturbation-based mechanism with a hybrid Bayesian Stackelberg–reinforcement learning (GT-RL) approach. 
The game-theoretic layer models attacker uncertainty and strategic behavior, while the RL agent learns real-time perturbation strategies guided by game-theoretic insights into the reward function. Crucially, this design enables a VPP to implement effective cybersecurity using only selective voltage measurements from observer DERs, without requiring complete knowledge of network topology or full system states.

% Earlier analytical work used voltage sensitivities to choose DER perturbations, their locations, and relied on rule-based detection \cite{JPC+21,JPN21}. This was later extended with an RL agent that learned adaptive perturbation strategies \cite{FJS+25}, although it operated in a fixed environment without modeling uncertainty in attacker behavior.
\subsection{Contributions}
Building on the prior perturbation‐based detection \cite{FJS+25,JPC+21,JPN21}, we extend to a Bayesian Stackelberg game models the interactions of the attacker and the defender (VPP) and shapes the reward function of the RL agent, which generates perturbation signals for the real-time defense.  
The defender (as leader) maintains Gaussian‐process belief over attacker's profile parameters and updates them using perturbation detector outputs. 
Our main contributions are: 
\begin{itemize}
    \item A topology‐agnostic, perturbation‐based defense mechanism for VPP‐operated DERs.
    \item  A hybrid framework combining Bayesian Stackelberg game theory (to capture attacker–defender interactions under attacker profile uncertainty) with RL agent for adaptive, real‐time strategy optimization.
    \item  A mathematical formulation of the utilities, action spaces, and learning objectives for the integrated game-theoretic and RL components, together with a theoretical proof of equilibrium existence and an episode-level closed-loop utility bound connecting attacker-belief uncertainty with GT--RL action alignment.
\end{itemize}
{
The proposed detector and GT-RL policy are \emph{topology-agnostic} in a sense that they do not require an explicit feeder model (e.g., line parameters, admittance matrix, or switching configuration) nor do they require solving topology-dependent power-flow computations online. 
% Instead, the detection module relies on an \emph{empirical} local linear response model that maps the defender’s injected perturbations \((\Delta P,\Delta Q)\) to observed voltage deviations \(\Delta V\). Specifically, the coefficients \((x_p,x_q)\) are identified from data using weighted least squares (WLS) under trusted/normal operation, and detection is performed by testing whether the expected perturbation footprint remains consistent in the observation window.
}
% The remainder of the paper is organized as follows. The proposed method is developed through Section~\ref{sec3,sec4,sec5,sec6,sec7}. In Section~\ref{sec8} we illustrate the proposed framework on the IEEE 13-Bus and 123-Bus system and present a detailed analysis of the impact of game theoretic layer to the RL agent. Section \ref{sec9} concludes the paper and outlines future work.

% (i) A topology‐agnostic, perturbation‐based defense mechanism for VPP‐operated DERs. (ii) A hybrid framework combining Bayesian Stackelberg game theory (to capture attacker–defender interactions under attacker profile uncertainty) with RL agent for adaptive, real‐time strategy optimization. (iii) A mathematical formulation of utilities, action spaces, and learning objectives for the integrated game‐theoretic and RL components. (iv) Theoretical bound on regret defined on defender's utility due to mis-specification of attacker's profile.

\begin{figure}
    \centering
\includegraphics[width=1.02\linewidth]{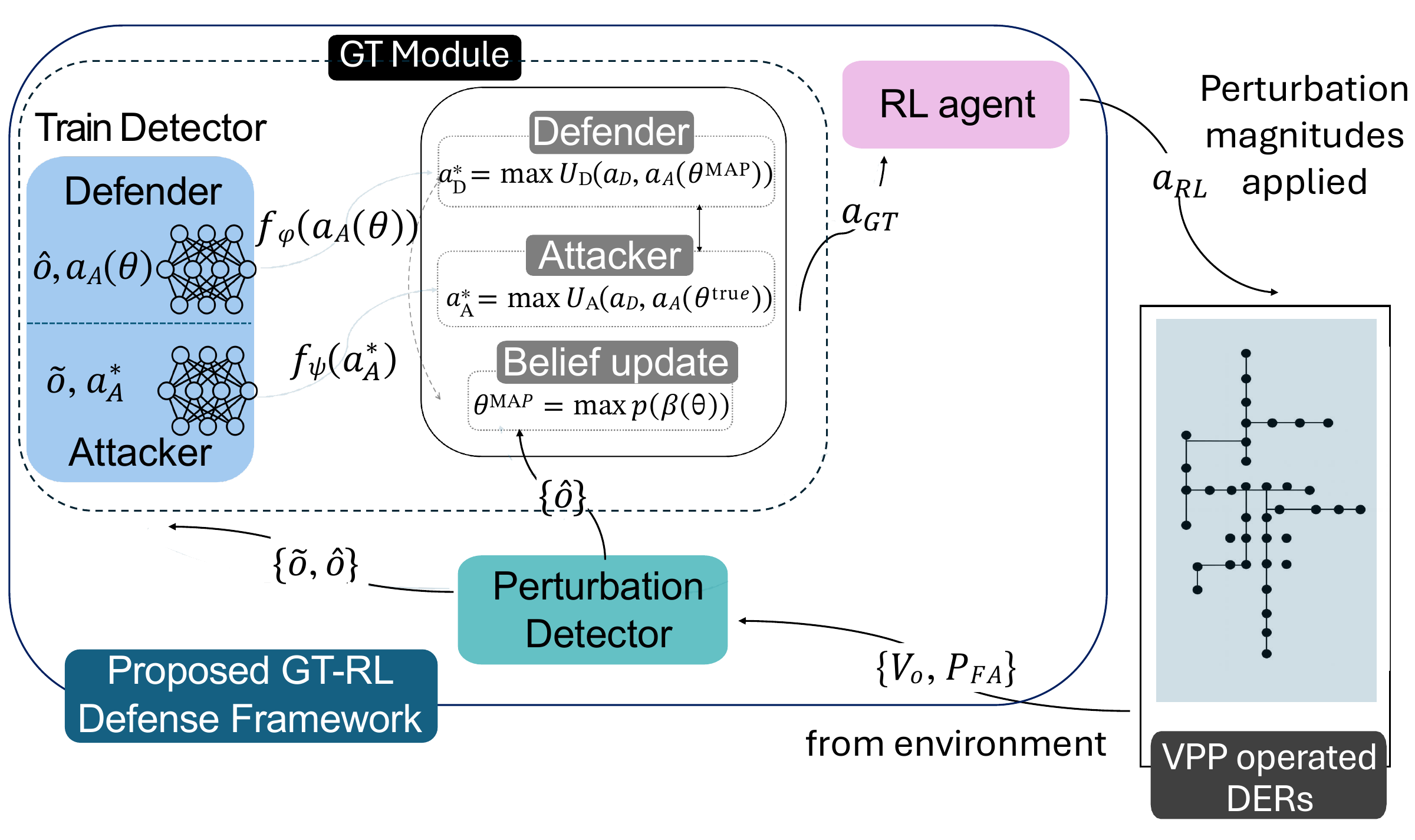}
    \caption{The proposed GT-RL method optimizes the perturbation magnitudes applied to the physical DER environment, while the environment returns voltage observations \(V_o\) and the prescribed false-alarm setting \(P_{FA}\) to the perturbation detector for watermark-response evaluation.}
    \label{depict}
\end{figure}

\section{System Model}

{
We consider a VPP that aggregates a fleet of DERs connected to an underlying electric distribution network. In this work, the \emph{defender} refers to the VPP operator that seeks to detect cyber-physical attacks affecting DER measurements, DER setpoints, or DER communication channels. Let $\mathcal{D}$ denote the set of DERs operated by the VPP. For each DER $d \in \mathcal{D}$, the VPP has access to DER-level quantities such as active power $P^d(t)$, reactive power $Q^d(t)$, and local voltage magnitude $V_o^d(t)$ measured at the DER point of interconnection. These quantities are assumed to be available through the DER controller, inverter interface, or aggregator communication layer.
}

{
A central assumption of this work is that the VPP does not have access to the full distribution-system model including the feeder topology, line impedances, admittance matrix, switching configuration, or complete network operating state. Such information is typically owned by the distribution utility and may not be shared with third-party aggregators. Therefore, conventional cyber-physical detection methods that require a full power-flow model, state estimator, or topology-dependent sensitivity matrix are not directly applicable from the VPP operator's perspective. The proposed method is designed for this limited-information setting.
}

{
The perturbation-based detection principle builds on our prior work on
DER-enabled attack detection~\cite{JPN21,JPC+21} and its extension to a topology-unobserved VPP setting~\cite{FJS+25}. Accordingly, the use of small physical perturbations as a detection signature is not itself the primary novelty. The main contribution is the integration of Bayesian Stackelberg reasoning, belief-based attacker modeling, and reinforcement learning for adaptive perturbation design under attacker uncertainty.
}

{
The VPP selects actor DERs
\(\mathcal{D}_{\mathrm{act}}\subseteq\mathcal{D}\), whose active- and
reactive-power setpoints can be perturbed, and observer DERs
\(\mathcal{D}_{\mathrm{obs}}\subseteq\mathcal{D}\), whose voltage
measurements are monitored. A DER may serve as an actor, an observer, or both.
For \(d\in\mathcal{D}_{\mathrm{act}}\), the defender injects small,
time-stamped perturbations \(\Delta P_D^d(t)\) and \(\Delta Q_D^d(t)\).
The perturbation sequence is known to the VPP and assumed unknown to the
attacker. Its magnitude is constrained by DER capability, power-quality, and
operational requirements, and is kept sufficiently small to support a local
linear approximation around the current operating point.
}

{
For an observer DER \(o\in\mathcal{D}_{\mathrm{obs}}\), the resulting voltage
response is approximated as
}
\begin{equation}
{
\Delta V_o(t)
\approx
x_{p,o}\Delta P_D(t)
+
x_{q,o}\Delta Q_D(t)
+
e_o(t),
}
\label{eq:local_voltage_response_system}
\end{equation}
{
where \(\Delta V_o(t)\) is the observed voltage deviation,
\(\Delta P_D(t)\) and \(\Delta Q_D(t)\) denote the aggregate or selected
actor-DER perturbations, \(x_{p,o}\) and \(x_{q,o}\) are local active- and
reactive-power-to-voltage sensitivity coefficients, and \(e_o(t)\) denotes
voltage-fluctuation noise caused by load variation, stochastic DER output,
measurement noise, and residual modeling error. The coefficients
\(x_{p,o}\) and \(x_{q,o}\) are topology-implicit because they depend on the
feeder configuration, line parameters, and operating state. However, they are
estimated from perturbation-response measurements rather than computed from an
explicit feeder model. Thus, the physics remains topology-dependent, while
the VPP implementation does not require topology information. For multiple
actor DERs in the small-perturbation regime, the observer-voltage response is
approximated by superposition of their individual active- and reactive-power
contributions~\cite{munikoti2021probabilistic}. This approximation is local to
the current operating point and is not assumed to remain valid under
arbitrarily large or abrupt system changes.
}

{
The defender-induced voltage response acts as a secret physical signature,
analogous to a dynamic watermark. Under benign operation, the expected voltage
perturbation (VP) signal should be present in the observer measurements.
Measurement tampering, setpoint manipulation, packet delay, or communication
loss may distort, suppress, delay, or remove this signal. Attack detection is
therefore formulated as determining whether the known VP signal is present in
noisy voltage observations.
}

{
The perturbation detector performs only this statistical signal-detection
task. Using the known perturbation sequence, measured voltage response, and
estimated sensitivities, it produces a binary decision, detection probability,
and confidence measure. These outputs indicate whether the observed response
is consistent with the expected VP signal and provide evidence to the
higher-level belief-update and decision-making modules; the detector itself
does not directly infer the strategic attacker profile.
}

{
The attacker is modeled as an intelligent adversary who may compromise the cyber-physical operation of one or more DERs. The attacker may manipulate measurements, alter control commands or setpoints, or disrupt the availability of communication packets. In this work, representative attack classes include FDI, MitM, and DoS attacks, which are commonly studied attack categories in smart-grid and DER cybersecurity~\cite{deng2018false,huseinovic2020survey,raja2024mitigation,wlazlo2021mitm}. The defender does not know the attack type, timing, magnitude, or stealth level in advance. These unknown attack characteristics are represented through an attacker profile $\bm{\theta}$, which is inferred over time from the evidence produced by the detector.
}

{
It is important to distinguish system uncertainty from attacker uncertainty. System uncertainty refers to benign variations in the voltage response caused by load changes, stochastic DER output, measurement noise, and small operating-point fluctuations. This uncertainty appears as the residual term in the voltage observation model and affects the statistical reliability of the detector. Attacker uncertainty refers to the defender's lack of knowledge about the attacker's type, intent, timing, aggressiveness, and stealth. This uncertainty is handled by the game-theoretic and belief-update components. Thus, the proposed framework does not treat natural system variability as an attack feature; instead, it separates signal-detection uncertainty from strategic attacker-profile uncertainty.
}

{
The RL agent is introduced because the VPP must decide how to choose perturbation actions under partial observability, stochastic operating conditions, and limited network information. If the VPP knew the full network model and operating state, it could compute topology-dependent sensitivity information and design perturbations analytically, as in prior distribution-utility-oriented perturbation-based detection methods~\cite{JPN21,JPC+21}. In the third-party VPP setting, however, the defender must learn from observed responses. The RL agent uses recent system observations and detector feedback to adapt the magnitude, timing, and allocation of perturbations across actor DERs. Its objective is to improve detectability while limiting unnecessary perturbation cost and avoiding operationally excessive setpoint changes. This extends our prior RL-based VPP perturbation-defense work~\cite{FJS+25} by incorporating strategic attacker uncertainty and game-theoretic belief guidance.
}

{
The game-theoretic module is introduced because the attacker is not merely a random disturbance; it may adapt strategically to the defender's actions. The defender therefore maintains a belief distribution over possible attacker profiles and uses a Bayesian Stackelberg game to reason about attacker--defender interactions. In this formulation, the VPP acts as the leader that selects a defense strategy, while the attacker is modeled as a follower that responds according to its profile and objective. The game-theoretic module provides a strategic candidate perturbation action or guidance signal to the RL agent based on the current belief over attacker behavior. The RL agent then adapts this guidance using real-time observations and detector feedback.
}

{
The overall information flow is as follows. First, the VPP operator, acting as the defender, selects small perturbations for actor DERs. Second, the resulting voltage responses are measured at observer DERs. Third, the detector tests whether the expected perturbation footprint is present in the voltage observations. Fourth, the detector outputs are used to update the defender's belief over attacker profiles. Fifth, the Bayesian Stackelberg module uses the updated belief to generate strategic defense guidance. Finally, the RL agent uses this guidance together with recent system observations to choose future perturbation actions. This closed-loop structure allows the VPP to perform adaptive cyber-physical defense using only DER-level measurements and controls, without requiring explicit access to the distribution-network topology or full system state.
}
\section{Perturbation Detector Design}\label{sec3}
The VPP employs a perturbation‑based defense, analogous to the application of a dynamic watermark whose absence would indicate signal tampering. 
We design the statistical perturbation-based detector. A target log-likelihood ratio is set as a threshold to turn the resulting confidence measure into a boolean flag indicating a potential attack.
% and we show how modifying this threshold using game-theoretic fine tuning dramatically increases system performance. 
% More formally: 

The actor nodes $d_{a}\in\mathcal{D}_{act}$ apply small, time-stamped power injections by adjusting their active and reactive power \(\{\Delta P_D^{d_{a}}(t), \Delta Q_D^{d_{a}}(t)\}\). The resulting voltage responses \(V_o\) are then monitored by the observer DERs \( d \in \mathcal{D}_{obs} \) to detect potential tampering.

These defender’s perturbations induce a voltage deviation $\Delta V_P(t)$, called the differential voltage perturbation signal, at all the observer DERs, proportional to the injected perturbations:  
\begin{equation}  
\small
    \Delta V_P^d(t) = x_p \Delta P_D^{d_{a}}(t) + x_q \Delta Q_D^{d_{a}}(t),\; \forall d\in\mathcal{D}_{obs},  \forall t \in T_{o}.  
\end{equation}  
The detector then identifies whether the $V_{p}$ signal is present in the voltage observations $V_{o}^d$ during the observation period $T_{o}$.
\begin{equation}
\small
    \Delta V_{o}^{d}(t)=x_p \Delta P_{D}^{{d}_{a}}(t) + x_q \Delta Q_{D}^{{d}_{a}}(t) + e(t), \forall d \in D_{obs},
\end{equation}
where $e(t)\sim \mathcal{N}(0,\sigma^{2})$.

{
The sensitivity coefficients $x_p$ and $x_q$ represent the nominal active and reactive power-to-voltage sensitivities of the voltage observations.}
At each iteration of the defender's mechanism, the detector determines the presence of $\Delta V_{\text{P}}$ signal by gathering observations $\bm{V}_{o}=[V_{o}(1), \hdots, V_{o}(T_{o})]$ from the period of observation $T_{o}$. 
{
The noise term models system uncertainty accounting time varying load changes, stochastic
DER output, measurement noise, and small operating-point
fluctuations.}
The detector operates on  hypothesis-based signal detection, where: 
\[
\small
\begin{cases}
\mathcal{H}_0: & \text{Observation contains only noise (No sequence detected)}, \\
\mathcal{H}_1: & \text{Observation contains $V_{\text{P}}$ signal (Sequence detected)}.
\end{cases}
\]
The set of observations $\Delta \bm{V}_{o}=[\Delta V_{o}(2), \hdots, \Delta V_{o}(T_{o})]$ allow the null hypothesis to be $\mathcal{H}_0: \Delta \bm{V}_{o} = \bm{e}$ and alternative hypothesis $\mathcal{H}_1:\Delta \bm{V}_{o}= x_{p} \Delta \bm{P}_{D}+ x_{q} \Delta \bm{Q}_{D}+ \bm{e}$. 
The coefficients $\bm{x}=[x_{p}, x_{q}]$ based on the regressor $\Delta \bm{S}= [\Delta \bm{P}_{D}, \Delta \bm{Q}_{D}]$ are estimated as 
\begin{equation}\label{wls}
    \hat{\bm{x}}= (\Delta \bm{S}^{\top}\Delta \bm{S})^{-1}\Delta \bm{S}^{\top}\Delta \bm{V}_{o}.
\end{equation}
The variance of the error i.e. the variance of the residual $\bm{r}=  \Delta  \bm{V}_{o} -  \hat{\bm{x}}^{\top}\Delta \bm{S}$ can be estimated using sample variance as:
\begin{equation}
    \hat{\sigma}^{2}= \frac{1}{T_{o}-1} \sum_{t=2}^{T_{o}} r(t)^{2}.
\end{equation}

% \subsection{Decision rule of the detector}
The optimal detector under the Neyman–Pearson framework decides on the hypothesis based the following decision rule for the log likelihood ratio $\mathcal{L}(\Delta \bm{V}_{o})$:  
\begin{equation}
    \mathcal{L}(\Delta \bm{V}_{o}) 
    \;\underset{\mathcal{H}_{0}}{\overset{\mathcal{H}_{1}}{\gtrless}}\; 
    \gamma.
\end{equation}
% \begin{figure}
%     \centering
%     \includegraphics[width=0.8\linewidth]{Figures/detector_performance_from_belief_update_mitm.png}
%     \caption{Probability of detection  $P_D$ for a MitM attack as defender updates its belief $\beta(\theta_{\text{type}})$ in the IEEE 13-Bus system.}
%     \label{fig:detection_over_belief}
% \end{figure} 
The likelihood ratio is the fraction of the probability of distribution of $\Delta \bm{V}_{o}$, $p(\Delta \bm{V}_{o})$, under $\mathcal{H}_{1}$ and $\mathcal{H}_{0}$:
    \begin{align}
    \small
        \mathcal{L}(\Delta \bm{V}_{o}) \notag
        &= \prod_{t=2}^{T_{o}} \frac{p(\Delta V_{o}(t)|\mathcal{H}_1)}{p(\Delta V_{o}(t)|\mathcal{H}_0)} = \prod_{t=2}^{T_{o}} 
 \frac{p(\Delta  V_{o}(t)- \Delta 
 \hat{V}_{o}(t))}{p(\Delta V_{o}(t))}\\
        &= \prod_{t=2}^{T_{o}}  \exp\left( \frac{1}{2\sigma^2}(2\Delta V_{o}(t) \Delta  \hat{V}_{o}(t)- \Delta  \hat{V}_{o}(t)^{2}) \right).   
    \end{align}
To ensure the decision rule is based on a monotonic (strictly increasing or decreasing) function of the likelihood ratio, we take log 
\begin{equation}
     \log( \mathcal{L}(\Delta \bm{V}_{o}))=  \frac{1}{2\sigma^2} \sum_{t=2}^{T_{o}} (2 \Delta  V_{o}(t) \Delta  \hat{V}_{o}(t)-\Delta \hat{V}_{o}(t)^{2}).  
\end{equation}
Incorporating $2\sigma^{2}$ and $\hat{V}_{o}^{2}$ into the threshold $\gamma$, the detector becomes a pure correlator detector where the threshold of detection $\gamma'=\sigma^2\log(\gamma)+ \frac{1}{2}\hat{V}_{o}^{2}$, giving us the test statistics: 
\begin{equation}
    \mathcal{T}(\Delta \bm{V}_{o})= \Delta  \bm{V}_{o}^{\top} \Delta  \hat{\bm{V}}_{o}.
\end{equation}
The distribution of $\mathcal{T}(\Delta \bm{V}_{o})$ under the hypothesis $\mathcal{H}_{0}$ and $\mathcal{H}_{1}$ is
\begin{equation}\label{15}
    \mathcal{T}(\Delta \bm{V_{o}}) \sim
\begin{cases}
\small
\mathcal{N}\left(0,\, \sigma^2 \|\Delta\hat{\bm{V}_{o}}\|^2\right), & \text{under } \mathcal{H}_0 \\\\
\mathcal{N}\left(\|\Delta\hat{\bm{V}_{o}}\|^2,\, \sigma^2 \|\Delta\hat{\bm{V}_{o}}\|^2\right), & \text{under } \mathcal{H}_1.
\end{cases}
\end{equation}
{
Under the stated Gaussian simple-hypothesis model, the Neyman--Pearson
likelihood-ratio test yields the most powerful test for a prescribed
\(P_{FA}\)~\cite{neyman1933problem,steven1993fundamentals}.
The detection threshold $\gamma$ is selected based on a desired false alarm rate $P_{\mathrm{FA}}$, that is, the probability with which the detector detects the presence of $V_P$ signal when in fact the system is under attack. 
Mathematically, we use the inverse Gaussian tail function $Q^{-1}(P_{\mathrm{FA}})$ to compute the corresponding threshold value:
\begin{equation}
    \gamma' = \sigma\, Q^{-1}(P_{\text{FA}})\, \| \Delta \hat{\bm{V}}_{o} \| + \tfrac{1}{2} \| \Delta \hat{\bm{V}}_{o} \|^2.
\end{equation}} 
%2.2

{
In our formulation, $\mathcal{H}_0$ denotes the presence of an attack, whereas $\mathcal{H}_1$ corresponds to the no-attack (benign) operating condition. The detector compares the test statistic $\mathcal{T}(\Delta \bm{V}_{o})$ against the threshold $\gamma'$ and decides in favor of $\mathcal{H}_1$ if $\mathcal{T}>\gamma'$, and $\mathcal{H}_0$ otherwise. 
Under this convention, two complementary detection probabilities naturally arise. The first is the probability of correctly identifying the no-attack condition when the system is benign, i.e., deciding $\mathcal{H}_1$ under $\mathcal{H}_1$. The second is the probability of correctly detecting an attack, i.e., deciding $\mathcal{H}_0$ under $\mathcal{H}_0$. Both quantities are functions of the same test statistic distribution and threshold, but correspond to different tails of the Gaussian distribution under their respective hypotheses. We next derive closed-form expressions for these probabilities.}

\subsection{Probability of detection}
The probability of detecting the $V_P$ signal is defined as the probability that the test statistic exceeds the threshold under the hypothesis $\mathcal{H}_1$, i.e., $p(\mathcal{T}>\gamma'| \mathcal{H}_{1})$.
This also represents the probability of correctly identifying a no-attack condition when the system is indeed not under attack.
Using the distribution $\mathcal{T} \sim \mathcal{N}\left(\|\Delta\hat{\bm{V}_{o}}\|^2,\, \sigma^2 \|\Delta\hat{\bm{V}_{o}}\|^2\right)$ from \eqref{15},
we get,
\begin{equation}
\small
    P^{\text{no-attack}}_{\mathrm{det}} = p(\mathcal{T} > \gamma' \mid \mathcal{H}_1) = Q\left( \frac{\gamma' - \|\Delta \hat{\bm{V}}_o\|^2}{\sigma \|\Delta \hat{\bm{V}}_o\|} \right),
\end{equation} 
further substituting $\gamma'=\sigma\, Q^{-1}(P_{\text{FA}})\, \| \Delta \hat{\bm{V}}_{o} \| + \tfrac{1}{2} \| \Delta \hat{\bm{V}}_{o} \|^2$, we arrive at
\begin{equation}
   P^{\text{no-attack}}_{\mathrm{det}} = Q\left( Q^{-1}(P_{\text{FA}}) - \tfrac{1}{2} \cdot \frac{\|\Delta \hat{\bm{V}}_{o} \|}{\sigma} \right),
\end{equation}
Similarly, the probability of correctly detecting an attack (deciding \( \mathcal{H}_0 \) when \( \mathcal{H}_0 \) is true) is: 
\begin{equation}
\small
    P^{\text{attack}}_{\mathrm{det}} = p(\mathcal{T} < \gamma' \mid \mathcal{H}_0) = Q\left( Q^{-1}(P_{\text{FA}}) + \tfrac{1}{2} \cdot \frac{\|\Delta \hat{\bm{V}}_{o}\|}{\sigma} \right).
\end{equation}
\subsection{Posterior confidence of detection}
Posterior confidence of detection $C_{\mathrm{det}}$ is a Bayesian measure that tells us how likely the $V_P$ signal is present, given the observed data. It’s sample-specific and based on the likelihood ratio and prior beliefs. To keep our decision rule unbiased, we use non-informative priors: \( \pi_{0}=P(\mathcal H_{0}) \), \( \pi_{1}=P(\mathcal H_{1})=1-\pi_{0} =0.5\). 
The confidence of detection of the $V_{p}$ signal i.e., the confidence of detecting no attack, $C_{\mathrm{det}}^{\text{no-attack}}(\Delta \bm{V}_{o})$, is the Bayesian posterior $C_{\mathrm{det}}^{\text{no-attack}}(\Delta \bm{V}_{o})
 = p(\mathcal H_{1} \mid \Delta \bm{V}_{o})$, can be derived as:
\begin{equation}
\small
    C_{\mathrm{det}}^{\text{no-attack}}(\Delta \bm{V}_{o})
 = p(\mathcal H_{1} \mid \Delta \bm{V}_{o})
  = \frac{\pi_{1} \mathcal{L}(\Delta \bm{V}_{o})}{\pi_{0} + \pi_{1} \mathcal{L}(\Delta \bm{V}_{o})}.
% & = \frac{1}{1 + \mathcal{L}(\Delta \bm{V}_{o})}.
\end{equation}
Similarly, confidence of not detecting the $V_{p}$ signal implies the presence of attack. Therefore the confidence of detecting an attack $C_{\mathrm{det}}^{\text{attack}}$ can be given by
    $C_{\mathrm{det}}^{\text{attack}}= 1-C_{\mathrm{det}}^{\text{no-attack}}$.

\section{Stackelberg Game Setting}\label{sec4}  
To address challenges of the uncertainty of the attacker  cybersecurity  in VPP-operated DERs, the interaction between the defender (VPP operator) and the attacker is modeled in hybrid Bayesian Stackelberg game setting. This framework combines the hierarchical decision-making of Stackelberg games with Bayesian probability to handle uncertainty about the attacker’s capabilities and objectives. The defender maintains a belief over attacker's profile parameters and anticipates their response when optimizing its own defense strategy. Let us derive their action spaces and utility functions.
% \begin{figure}[t]%
%     \centering
% \subfloat[\centering]{\includegraphics[width=0.49\linewidth,trim=1.2 0 0 1,clip]{Figures/fig1.png}}%
% \subfloat[\centering ]{\includegraphics[width=0.49\linewidth,trim=0 0 0 1,clip]{Figures/fig2.png}}%
%     \caption{Posterior mass over the attacker’s type parameter $\beta({\theta_{\text{type}}})$, of the defender's belief model in the case of (a) FDI and (b) MitM attacks on IEEE 13-Bus system.}%
% \label{fig:belief_convergence_types}
% \end{figure}%
\subsection{Attacker's action space}\label{attack_action_space}
We consider an intelligent attacker capable of launching three types of cyberattacks: FDI attacks, DoS, and MitM attacks. Each attacker is represented by a unified vector, hereafter referred as profile parameters, $\bm{\theta} = ( a, s, k, t_{0}, {\theta}_{\text{type}})$, where ${\theta}_{\text{type}} \in \{1, 2, 3\}$ indicates the type of attack (e.g., 1 for FDI and 2 for MitM). The parameter $a \in [0, 1]$ denotes the level of aggression, reflecting how strongly the attacker attempts to attack the system.
The parameter $s \in [0, 1]$ encodes stealthiness, representing how carefully the attacker hides its actions to avoid detection.
The parameter $k$ and $t_{0}$ controls the slope and activation time of the attack signal. 
The attacker can inject actions specific to FDI or a MitM attack as follows.
 \subsubsection{False Data Injection Attack} In the FDI attack, the attacker inserts false data into voltage observations $V_{d}$ of DER $d$ for some time $[t_0:t_{T_{A}}]$ \cite{deng2018false,bi2022impact}. For the $d-$th DER in the network, the replacement of actual voltage data $V_d(t)$ with $V^d_A(t)$.
     \begin{equation}\label{attack:fdi}      
     V_{\text{obs}}^d(t) = V_{\text{true}}^d(t) + V_A^d(t).
    \end{equation}  
   % (Action space is not the correct term here. Actions space you define for the attacker as a whole) action space is denoted by $\bm{a}_{A}^{d}=[\Delta \mathbf{V}_{A}^{d}, \mathbf{T}_{A}^{d}]$, subsequently, the attacker's action space at an instance $t$ for a DER $d$ can be defined as $a^{d}_{A}(t)= [\Delta V^{d}_{A}(t), t_{A}^{d}]$.
   To inject the false data, the action space of the attacker in the case of FDI attack is defined as:  
   \begin{equation}
       a^{d}_{A}(t)= [ V_{A}^{d}(t), t_{A}^{d}].
   \end{equation} 
 \subsubsection{Man-in-the-Middle Attack} In the case of a MitM, the attacker can modify the power output setpoints of the $d-$th DER: $(P^d(t), Q^d(t))$ \cite{raja2024mitigation,xia2012secure}
     \begin{equation}\label{attack:mitm}
     \small
     P_{\text{set}}^d(t) = P_{\text{legit}}^d(t) + P_A^d(t), \quad  Q_{\text{set}}^d(t) = Q_{\text{legit}}^d(t) + Q_A^d(t).      \end{equation}  
To manipulate their active and reactive power injections, the action space of the attacker includes:  
\begin{equation}
\small
     a^{d}_{A}(t)= [P_{A}^{d}(t), Q_{A}^{d}(t), t_{A}^{d}]. 
\end{equation}  

\subsubsection{Denial of service attack}
% 2.5
{
In a DoS attack, the adversary does not alter the physical values, but instead disrupts the availability of measurements by intermittently blocking or delaying their transmission \cite{huseinovic2020survey,wang2017strategic}. As a result, the defender does not receive fresh observations and instead operates on stale information.
\begin{equation}\label{attack:dos} 
V_{\text{obs}}^{d}(t)=\bigl(1 - \psi_{\mathrm{DoS}}(t)\bigr)V_{\text{true}}^{d}(t)
+  \psi_{\mathrm{DoS}}(t) V_{\text{obs}}^{d}(t-1),
\end{equation}
where $\psi_{\mathrm{DoS}}(t) \in \{0,1\}$ is a binary availability indicator.}

We fold the two behavioral parameters into one scalar envelope depending on aggression (a) and stealth (s): $\alpha=a(1-s)$. 
The attack signals can be designed as 
\begin{equation}\label{attackmap}
 \small
\mathbf{a}_A^d(\bm{\theta},t)=
\begin{aligned}
&[\,0,\;0,\;\alpha V_{\max}^d \psi_V(t)\,]^{\!\top} 
\text{ if } \theta_{\mathrm{type}}=\mathrm{FDI},\\[0.4em]
&[\,\alpha P_{\max}^d \psi_P(t),\;\alpha Q_{\max}^d \psi_Q(t),\;0\,]^{\!\top} 
\text{ if } \theta_{\mathrm{type}}=\mathrm{MitM}, ,\\[0.4em]
&[\,0,\;0,\; \alpha\psi_{\mathrm{DoS}}(t)\, V_{\mathrm{obs}}(t-1)\,]^{\!\top}, \quad \text{if } \theta_{\mathrm{type}}=\mathrm{DoS}.
\end{aligned}   
\end{equation}
The time-shaping functions \( \psi_{V}(t),\,\psi_{P}(t),\,\psi_{Q}(t) \in [-1, 1] \) allow modeling of bursts, ramps, or slowly varying biases. A typical choice is
    $\psi(t) = \tanh\bigl(k(t - t_0)\bigr)$,
which creates a smooth step. 
\begin{remark}
We acknowledge that real-world attacks can be more complex than the parametric profiles evaluated here, including replay attacks, coordinated multi-DER manipulation, and adaptive attackers attempting to suppress defensive signatures, as well as communication imperfections (delay, loss, jitter). Our simulations focus on stylized models of setpoint (measurement) tampering FDI (MitM) and availability disruption (DoS) for controlled, interpretable stress-testing; these attack surfaces align with established DER (VPP) cyber threat models \cite{wlazlo2021mitm}. Importantly, our detector follows an \emph{active watermarking} principle, injecting a small secret excitation and verifying its footprint, which is known to improve detectability of stealthy integrity and replay attacks \cite{yaghooti2021physicalwatermarking}. Coordinated multi-DER attacks are likewise recognized in active distribution networks and can be incorporated by extending the attacker action model (see Sec.~\ref{multider}) \cite{liu2024cfdi}. 
\end{remark}
\subsection{Defender's action space}
The defender (VPP) uses perturbation-based attack detection explained in Section \ref{sec3}, originally proposed in \cite{JPN21} to detect compromised DERs. 
The defender needs to decide on the magnitude of the real and reactive power perturbations $(|\Delta P^d(t)|, |\Delta Q^d(t)|)$ for each actor DERs $d \in \mathcal{D}_{\text{act}}$ and length of the perturbation sequence $T^{d}$ as well as vigilance factor $\rho$. 
The defender’s action space is defined as
\begin{equation}
    \mathbf{a}_D^d(t) = \left[\,\lvert\Delta P_{D}^{d}(t)\rvert,\; \lvert\Delta Q_{D}^{d}(t)\rvert,\; t_{D}^{d},\; \rho^{d}(t)\,\right],
\end{equation}
where $\lvert\Delta P_{D}^{d}\rvert$ and $\lvert\Delta Q_{D}^{d}\rvert$ represent the magnitudes of pseudo-random perturbations added to the actual power setpoints, $t_{D}^{d}$ denotes the perturbation timestamp, and $\rho^{d}(t)\in[0,1]$ is the vigilance factor controlling the defender’s sensitivity to potential attacks.
 
Let us derive the utility functions of the attacker ($U_{A}$) and the defender ($U_{D}$) by modeling their interactions in a Stackelberg Game framework with the leader being the defender and the attacker being the follower. 
\subsection{Utility Functions}
\subsubsection{Attacker's Reward}
The attacker’s goal is to maximize the impact or disruption $I(\cdot)$ of the attack on the network while minimizing the intrinsic probability of detection $p_{\textrm{det}}(\cdot)$ and the cost of attack $\mathcal{C}(\cdot)$. Without loss of generality, we suppress the DER index $d$ in the utility functions below.
\begin{multline}
   U_A(\mathbf{a}_{A}(\bm{\theta}, t))= \\
 -\hat{p}_{\mathrm{det}}(\mathbf{a}_{A}( \bm{\theta}, t)) + I(\mathbf{a}_{A}(\bm{\theta},t)) - k \mathcal{C}(\mathbf{a}_{A}(\bm{\theta}, t)).
\end{multline}
where $\hat{p}_{\mathrm{det}}(\mathbf{a}_{A}) = f_{\psi}(\mathbf{a}_{A})$, with 
$f_{\psi}(\mathbf{a}_{A}) \mapsto [0,1]$, denotes the detection probability, 
and the impact of an attack is modeled as
\begin{equation}
\small
    I(\mathbf{a}_{A}) = c_{1} f_{1}( V_{A}) + c_{2} f_{2}( P_{A}) + c_{3} f_{3}(Q_{A}).
\end{equation}
In FDI attacks, the impact is relatively limited since the VPP can directly observe anomalies in voltage measurements. 
In contrast, man-in-the-middle (MitM) attacks alter the DER active and reactive power set points, producing broader disruptions across both operations and market commitments. 
Specifically, in an FDI attack, false data can mislead the VPP into taking incorrect control actions, whereas in a MitM attack, malicious changes in $P_{\mathrm{set}}$ and $Q_{\mathrm{set}}$ can result in lost revenue and penalties for failing to meet contractual market obligations. 
To quantify these effects, we learn the impact functions $f_{1}(\Delta V_{A})$ for FDI and $f_{2}(\Delta P_{A})$, $f_{3}(\Delta Q_{A})$ for MitM, weighted by predefined constants $c_{1}, c_{2}, c_{3}$. 
The constants $c_{1}, c_{2}, c_{3}$ are empirically chosen to reflect the relative operational importance of
voltage, active-power, and reactive-power deviations, respectively. We adopt the ordering
$c_{3} > c_{2} > c_{1}$ to encode their system-level impact: \emph{reactive-power} disturbances ($\Delta Q$) most strongly affect voltage stability and can propagate across weakly coupled networks, risking wide-area voltage collapse; \emph{active-power}
deviations ($\Delta P$) directly perturb energy balance and frequency control and incur dispatch/market penalties, producing both operational and economic consequences; by contrast, \emph{voltage} magnitude deviations ($\Delta V$) are typically local effects of underlying $P/Q$ mismatches, are readily observable, and are often correctable by local regulation, thus receiving the smallest weight. 

% We assume the attacker attempts to learn the voltage deviation $\delta V_{A}$ induced by their own injected actions $\{ \Delta P_{A}, \Delta Q_{A}\}$ whenever executing a MitM attack, which attacker knows in the case of FDI attack. This estimate is obtained through a learned function $f_{\phi}(\Delta P_{A}, \Delta Q_{A}, \bm{\theta}_{\text{type}})$, where $\phi$ denotes the attacker’s internal model parameters.
% Given this, the attacker can also infer the defender’s contribution to the voltage deviation as:
% \[
% \delta \hat{V}_{\text{defender}} = \delta V_{o} - \delta \hat{V}_{A}
% \]
Finally, we define the cost incurred by the attacker as the squared norm of their actions  injections:
\begin{equation}
\mathcal{C}(\mathbf{a}_{A}) =  \mathbf{a}_{A}^\top \bm{W}(\bm{\theta}) \, \mathbf{a}_{A},
\end{equation}
where $\bm{W}(\bm{\theta})$  is a weighting matrix that assigns different cost sensitivities to each type of attack parameters $\bm{\theta}$.  
\subsubsection{VPP’s (Defender’s) Reward}
The VPP aims to design perturbation signals and their lengths maintaining a belief about attacker's profile $\bm{a}_A(\hat{\bm{\theta}})$ and detect DERs under attack effectively:
As a result, the defender's utility function in the game is defined as:
\begin{multline}\label{def_utility}
U_D\bigl(\bm a_D(t),\bm a_A({\hat{\bm{\theta}}},t)\bigr)
= \alpha\bigl(\rho(t)\bigr)\,p_{\textrm{det}}\bigl(\bm a_D,\bm{a}_A(\hat{\bm{\theta}},t)\bigr)\,
\\  R_{\mathrm{catch}}\bigl( \bm{a}_A(\hat{\bm{\theta}},t)\bigr) 
- \beta\bigl(\rho(t)\bigr)\,\mathcal C\bigl(\bm a_D(t)\bigr),
\end{multline}
where the first term determines the probability of detection, 
$p_{\mathrm{det}}\bigl(\bm{a}_{D}(t), \bm{a}_{A}(\hat{\bm{\theta}},t)\bigr) 
= P^{\text{attack}}_{\mathrm{det}}\bigl(\bm{a}_{D}(t)\bigr) 
f_{\varphi}\bigl(\bm{a}_{A}(\hat{\bm{\theta}},t)\bigr),$ 
weighted by the vigilance factor $\rho(t)$, 
with weighting functions defined as 
$\alpha(\rho(t)) = 1 + k_{1}\rho(t)$ and $\beta(\rho(t)) = 1 - k_{2}\rho(t).$ 
We model the defender’s intrinsic detector 
$f_{\varphi}\bigl(\bm{a}_{A}(\hat{\bm{\theta}},t)\bigr)$ as a neural network,
\begin{equation}
  \label{eq:detector-model}
  f_{\varphi}: \mathbb{R}^3 \longrightarrow [0,1].
\end{equation}
Here, $P^{\text{attack}}_{\mathrm{det}}$ denotes the probability of detection obtained from the detector module, 
and $f_{\varphi}$ explicitly quantifies the likelihood of detection based on the attacker’s input signals. The $f_{\varphi}$ makes the probability of attack type detection learnable. The network parameters $\varphi$ can be updated online as the attacker behavior changes. Thus the detection term $f_{\varphi}$ co-learns alongside statistical detector's $P_{\text{det}}$.
The second term, 
$R_{\text{catch}}\bigl(\bm{a}_{A}(\hat{\bm{\theta}},t)\bigr)
= c_{1} + c_{2}\exp{\bigl(I(\bm{a}_{A}(\hat{\bm{\theta}},t))\bigr)},$
represents the reward for detection and encourages the defender to prioritize identifying the most harmful attacks. The perturbations $\Delta P_{D}$ and $\Delta Q_{D}$ essentially correspond to small deviations in active and reactive power injections from the VPP’s actor DERs. In reality, these cannot fluctuate arbitrarily since large or frequent perturbations could violate power-quality limits or stress inverter.   
The third term quantifies the cost of the perturbation as 
$\mathcal{C}(\bm{a}_{D}, t) = \lVert \Delta P_{D}(t), \Delta Q_{D}(t) \rVert^{2}_{2}$, 
scaled by the vigilance factor $\rho(t)$. The quadratic cost penalizes such excessive control effort, keeping actions physically feasible.

\section{Bayesian Stackelberg Game Formulation}\label{sec5} 
% 2.4
{
A classical Stackelberg game assumes complete information, where the leader knows the follower’s utility and strategy space, yielding a deterministic leader–follower equilibrium. In contrast, a Bayesian Stackelberg game models uncertainty in the follower’s type through a belief distribution over attacker profiles, which is well suited to cyber–physical security settings where the defender does not know the attacker’s capabilities or intent a priori. By updating this belief using detection outcomes, the defender adapts its strategy over time, enabling robust and anticipatory defense under attacker uncertainty.}

In the Bayesian Stackelberg framework, the defender commits to an attack detection strategy before an attacker. However, the defender faces uncertainty about the attacker's profile $\bm{\theta} \in \Theta$, which is captured by a belief distribution $\beta(\bm{\theta})$.   

\subsection{Framework}
The interaction between the defender (VPP) and the attacker is modeled as a Stackelberg game where 
the {defender}, as the leader, chooses strategy $\bm{a}_{D}$, i.e. magnitude of the perturbation signal for each DER and length of the perturbation sequence to maximize its utility $U_{D}(\cdot)$. The defender does not know the true profile of the attacker $\bm{\theta}= [a, s , k, t_{0},{\theta}_{\mathrm{type}}]$. Instead, they maintain a belief distribution $\hat{\beta}(\bm{\widehat{\theta}}).$ 

Based on the prior belief, the VPP implements detection strategy:
    \begin{equation}\label{eq:defender_stackelberg}
    \bm a_D^*
=
\arg\max_{\bm a_D}
\sum_{\widehat{\bm\theta}}
\widehat{\beta}(\widehat{\bm\theta})
U_D\!\left(
\bm a_D,
\widehat{\bm a}_A^*(\widehat{\bm\theta})
\right).
    \end{equation}
The {attacker}, as the follower, observes the outcome of the defender's committed strategy $\bm{a}^{*}_{D}$, then chooses its best response, based on its type ${\theta}_{\text{type}}$, $\bm{a}^{*}_{A}$ to maximize its own utility:
    \begin{equation}\label{eq:attacker_best_response}
    \bm{a}^{*}_{A} = \arg\max_{\bm{a}_{A}} \; U_A(\bm{a}_{A}(\bm{\theta}), \bm{\widehat{\theta}}).
    \end{equation}
\noindent\textbf{Note.} We denote the true attacker profile by $\boldsymbol{\theta}$. The defender’s estimate of this profile is written as $\widehat{\boldsymbol{\theta}}$, obtained via the Gaussian process belief‐update procedure described in the next section. {The attack parameters are used only in the game-theoretic and belief-update layers to represent the uncertainty defender maintains regarding possible attacker behavior. These parameters are not used by the detector to determine whether the $V_P$ signal is absent.}

\subsection{Existence of a Bayesian Stackelberg Equilibrium}

The optimization in \eqref{eq:defender_stackelberg}--%
\eqref{eq:attacker_best_response} assumes that, for every
episode and defender belief, the attacker admits a best response
and the defender can select a belief-weighted optimal commitment.
Before integrating this game-theoretic solution with the RL
policy, we establish that the proposed Bayesian Stackelberg game
is well posed. In particular, the physical limits on DER
perturbations and attack magnitudes induce bounded action spaces,
while the proposed detection, impact, and quadratic-cost terms
yield continuous utilities. Under these conditions, an
equilibrium-informed defender action exists at every episode.

{
\begin{theorem}\label{equillibrium}
For episode \(\tau\), let
\[
\bm{a}_{D}\in \mathcal{A}_D\subset\mathbb{R}^{m_D},
\qquad
\bm{\theta}\in\Theta\subset\mathbb{R}^{m_\theta}
\]
be nonempty compact sets, and let
\(\beta^\tau\in\mathcal{P}(\Theta)\) be the defender's belief over attacker
profiles. For each \(\bm{\theta}\in\Theta\), let
\(\mathcal{A}_A(\theta)\subset\mathbb{R}^{m_A}\) be a nonempty compact attacker
action set. Assume that the correspondence
\(\bm{\theta}\mapsto \mathcal{A}_A(\bm{\theta})\) is continuous and compact-valued, and that
\[
U_D,U_A:\mathcal{A}_D\times \mathcal{A}_A(\bm{\theta})\times\Theta\rightarrow\mathbb{R}
\]
are bounded and continuous in their arguments on the feasible action-profile set.
Then there exists a Bayesian Stackelberg equilibrium
\[
\bigl(\bm{a}_D^{*,\tau},\bm{a}_A^{*,\tau}(\bm{\theta})\bigr),
\]
where
\[
\begin{aligned}
\bm{a}_A^{*,\tau}(\theta)
&\in BR_\theta(\bm{a}_D^{*,\tau}),\\
BR_\theta(\bm{a}_D)
&=
\arg\max_{\bm{a}_A\in\mathcal{A}_A(\bm{\theta})}
U_A(\bm{a}_D,\bm{a}_A,\theta).
\end{aligned}
\]
and
\[
\bm{a}_D^{*,\tau}
\in
\arg\max_{\bm{a}_D\in\mathcal{A}_D}
\int_{\Theta}
U_D\!\left(\bm{a}_D,\bm{a}_A^{*,\tau}(\bm{\theta}),\bm{\theta}\right)
\,d\beta^\tau(\bm{\theta}).
\]
\end{theorem}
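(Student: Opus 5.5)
The plan is to prove existence of a \emph{strong} Bayesian Stackelberg equilibrium, in which the attacker breaks ties in favor of the defender. The argument runs through Berge's maximum theorem and semicontinuity of the induced leader value. It uses only the hypotheses of Theorem~\ref{equillibrium}: compactness of $\mathcal{A}_D$ and $\Theta$, the continuous compact-valued correspondence $\bm{\theta}\mapsto\mathcal{A}_A(\bm{\theta})$, and bounded continuous $U_D,U_A$. The specific forms of the utilities from Section~\ref{sec4} are needed only to check that these hypotheses hold: bounded perturbation and attack magnitudes, and continuous detection, impact and quadratic-cost terms.

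\textbf{Step 1 (follower).} Consider the constant correspondence $(\bm{a}_D,\bm{\theta})\mapsto\mathcal{A}_A(\bm{\theta})$ on the compact set $\mathcal{A}_D\times\Theta$. It is continuous with nonempty compact values. Since $U_A$ is continuous, Berge's maximum theorem gives two facts. First, $BR_{\bm\theta}(\bm{a}_D)$ is nonempty and compact. Second, the map $(\bm{a}_D,\bm{\theta})\mapsto BR_{\bm\theta}(\bm{a}_D)$ is upper hemicontinuous, with closed graph. This settles the first equilibrium condition for any selection we make later.

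\textbf{Step 2 (induced leader value).} Define
\[
V(\bm{a}_D,\bm{\theta})=\max_{\bm{a}_A\in BR_{\bm\theta}(\bm{a}_D)} U_D(\bm{a}_D,\bm{a}_A,\bm{\theta}).
\]
The maximum is attained because $U_D$ is continuous and $BR$ is compact. Using the closed graph of $BR$ and compactness of $\bigcup_{\bm\theta}\mathcal{A}_A(\bm\theta)$, we show that $V$ is upper semicontinuous. Take $(\bm{a}_D^n,\bm\theta^n)\to(\bm{a}_D,\bm\theta)$ together with maximizers $\bm{a}_A^n$. Pass to a convergent subsequence $\bm{a}_A^n\to\bar{\bm{a}}_A$. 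Then $\bar{\bm{a}}_A\in BR_{\bm\theta}(\bm{a}_D)$, and hence
\[
\limsup_n V(\bm{a}_D^n,\bm\theta^n)\le U_D(\bm{a}_D,\bar{\bm{a}}_A,\bm\theta)\le V(\bm{a}_D,\bm\theta).
\]
Since $V$ is also bounded, it is Borel measurable. The reverse Fatou lemma with respect to the fixed measure $\beta^\tau$ then shows that
\[
W(\bm{a}_D)=\int_\Theta V(\bm{a}_D,\bm\theta)\,d\beta^\tau(\bm\theta)
\]
is upper semicontinuous on the compact set $\mathcal{A}_D$. By Weierstrass' theorem for usc functions, $W$ attains its maximum at some $\bm{a}_D^{*,\tau}$.

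\textbf{Step 3 (selection).} With $\bm{a}_D^{*,\tau}$ fixed, the set
\[
\{\bm{a}_A\in BR_{\bm\theta}(\bm{a}_D^{*,\tau}) : U_D(\bm{a}_D^{*,\tau},\bm{a}_A,\bm\theta)=V(\bm{a}_D^{*,\tau},\bm\theta)\}
\]
is nonempty and compact, and it depends measurably on $\bm\theta$. By the measurable maximum theorem (Kuratowski--Ryll-Nardzewski selection), there is a Borel selection $\bm{a}_A^{*,\tau}(\bm\theta)$ from it. This selection satisfies $\bm{a}_A^{*,\tau}(\bm\theta)\in BR_{\bm\theta}(\bm{a}_D^{*,\tau})$, and its integrand equals $V$.

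\textbf{Step 4 (leader optimality).} For any $\bm{a}_D\in\mathcal{A}_D$,
\[
\int U_D(\bm{a}_D^{*,\tau},\bm{a}_A^{*,\tau},\bm\theta)\,d\beta^\tau=W(\bm{a}_D^{*,\tau})\ge W(\bm{a}_D).
\]
Here $W(\bm{a}_D)$ is the value the leader obtains at $\bm{a}_D$ when the follower best-responds and breaks ties favorably. This is exactly the required defender optimality, read in the Stackelberg sense.

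\textbf{Main obstacle.} The hard step is Step 2. The best response is generally set-valued and discontinuous in $\bm{a}_D$, so an arbitrary selection would make the leader objective discontinuous, and a maximizer might not exist. The strong (favorable tie-breaking) convention is what makes $V$ upper semicontinuous. If this convention is not wanted, my fallback is an alternative: in the finite-type case actually used in the paper, with $\theta_{\mathrm{type}}\in\{1,2,3\}$ and a discretized belief as in \eqref{eq:defender_stackelberg}, the integral reduces to a finite sum. Upper semicontinuity of each summand then suffices, and no measurability argument is needed.
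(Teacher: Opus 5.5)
Your proposal is correct and follows essentially the same route as the paper's proof: Berge's theorem for the follower, strong-Stackelberg (leader-favorable) tie-breaking to obtain an upper semicontinuous induced value, integration against $\beta^\tau$ plus Weierstrass for the leader, and a favorable selection from $BR_{\bm\theta}(\bm{a}_D^{*,\tau})$; it also supplies details the paper omits, namely joint upper semicontinuity for Borel measurability in $\bm\theta$, reverse Fatou for upper semicontinuity of the integral, and a measurable selection so that the equilibrium integral is defined. One small slip is in your closing remark: the finite-type fallback removes only the measurability issues, not the need for favorable tie-breaking, because upper semicontinuity of each summand still depends on that convention.
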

}

\begin{proof}
{
Fix an episode \(\tau\) and a defender belief
\(\beta^\tau\in\mathcal{P}(\Theta)\). In the proposed Stackelberg game, the VPP
defender is the leader and first commits to a perturbation strategy
\(\bm{a}_D\in\mathcal{A}_D\). After observing this commitment, an attacker with profile
\(\bm{\theta}\in\Theta\) acts as the follower and selects a best response
\[
BR_\theta(\bm{a}_D)
=
\arg\max_{\bm{a}_A\in\mathcal{A}_A(\bm{\theta})}
U_A(\bm{a}_D,\bm{a}_A,\bm{\theta}).
\]
Since \(\mathcal{A}_A(\bm{\theta})\) is nonempty and compact and \(U_A\) is continuous
over the follower's feasible action set, the Weierstrass theorem guarantees that
\(BR_\theta(\bm{a}_D)\) is nonempty for every \((\bm{a}_D,\theta)\)
\cite{sundaram1996first}. Moreover, by Berge's maximum theorem, the
best-response correspondence has nonempty compact values and is upper
hemicontinuous under the stated compactness and continuity assumptions
\cite{berge1963topological}.}
{
We adopt the standard strong Stackelberg convention: when several attacker actions yield the same follower utility, the selected best response is the one most favorable to the leader. This tie-breaking rule is used only to resolve indifference among the follower's best responses; it does not alter the attacker's objective or assume cooperative behavior. Hence, for each \(\bm{\theta}\), define the defender's induced value as
\begin{equation*}
V_D(\bm{a}_D,\bm{\theta})
=
\max_{\bm{a}_A\in BR_{\bm{\theta}}(\bm{a}_D)}
U_D(\bm{a}_D,\bm{a}_A,\bm{\theta}).
\end{equation*}
Because $BR_\theta(\bm{a}_D)$ is compact-valued and \(U_D\) is continuous, another application of the Weierstrass theorem ensures that this maximum is attained. Berge's maximum theorem further implies that \(V_D(a_D,\theta)\) is upper semicontinuous in \(\bm{a}_D\) \cite{berge1963topological}.}

{ Now define the defender's belief-weighted objective
\[
J_D^\tau(\bm{a}_D)
=
\int_{\Theta} V_D(\bm{a}_D,\theta)\,d\beta^\tau(\bm{\theta}).
\]
Since \(U_D\) is bounded and \(V_D\) is upper semicontinuous in the leader action,
\(J_D^\tau(\bm{a}_D)\) is well-defined and upper semicontinuous on the compact action
space \(\mathcal{A}_D\). Therefore, by the Weierstrass theorem, there exists at
least one maximizer
\[
\bm{a}_D^{*,\tau}
\in
\arg\max_{\bm{a}_D\in\mathcal{A}_D} J_D^\tau(\bm{a}_D).
\]
For each attacker profile \(\theta\), choose a follower response
\[
\bm{a}_A^{*,\tau}(\bm{\theta})
\in
\arg\max_{\bm{a}_A\in BR_\theta(\bm{a}_D^{*,\tau})}
U_D(\bm{a}_D^{*,\tau},a_A,\theta).
\]
By construction,
\(\bm{a}_A^{*,\tau}(\theta)\in BR_\theta(\bm{a}_D^{*,\tau})\), and
\(\bm{a}_D^{*,\tau}\) maximizes the defender's expected utility under the induced
strong Stackelberg follower response. Hence,
\[
\bigl(\bm{a}_D^{*,\tau},\bm{a}_A^{*,\tau}(\bm{\theta})\bigr)
\]
constitutes a Bayesian Stackelberg equilibrium at episode \(\tau\).
}
\end{proof}

\begin{remark}
The conditions of Theorem~\ref{equillibrium} guarantee the existence, but not the uniqueness, of the Bayesian Stackelberg equilibrium. The strong Stackelberg convention resolves multiplicity in the follower's best response by selecting,
among utility-maximizing attacker actions, the response most favorable to the defender. However, the resulting belief-weighted defender objective \(J_D^\tau(a_D)\) may still admit multiple maximizing actions. In this case, the GT module selects one admissible equilibrium maximizer returned by the numerical solver, and the selected action is supplied to the RL policy as equilibrium-informed guidance. The RL agent then adapts this guidance using the current system observations and its reward objective.
\end{remark}

% where $U_D(\bm{a}_{D})= \sum_{t=1}^{T-1}U_D(\bm{a}_{D},t)$ and $U_A(\bm{a}_{A},\bm{a}_{D})= \sum_{t=1}^{T-1} U_A(\bm{a}_{A},\bm{a}_{D},t)$

\subsection{Defender's Belief $\beta(\bm{\theta})$}
{
% \subsubsection{Attacker Type Vector Representation}
Each attacker is represented by the profile parameters $\bm{\theta} = [ a, s,  k, t_{0}, {\theta}_{\text{type}}]$ defined in the Section~\ref{attack_action_space}. 
The defender maintains a belief distribution $\beta(\bm{\theta})$ over the attacker's profile parameters, which is updated over time based on observed system behavior and attack patterns.}
% \begin{figure*}[!b]
% \vspace{-0.6cm}
% \noindent\makebox[\linewidth]{\rule{\textwidth}{0.4pt}}
% \vspace{-0.6cm}
% \begin{multline}\label{eq20}{34}
%   U_D\!\bigl(\bm a_D(t),\bm a_A(\widehat{\bm\theta},t)\bigr)
% =\alpha(\rho_t)\,P_{\mathrm{det}}\!\bigl(\bm a_D(t)\bigr)\,
% \underbrace{f_{\varphi}\!\bigl(\bm a_A(\widehat{\bm\theta},t)\bigr)\,
% (\bar c_{1}+\bar c_{2}\,e^{I(y)}\!\bigl(\bm a_A(\widehat{\bm\theta},t)\bigr))}_{=:~g(\bm a_A(\widehat{\bm\theta},t))}
% -\beta(\rho_t)\,\mathcal C\!\bigl(\bm a_D(t)\bigr), 
% \end{multline}
% \vspace{-0.2cm}
% with $R_{\mathrm{catch}}(y)=\bar c_{1}+\bar c_{2}\,e^{I(y)}$ and 
% $I(y)=c_1 f_1(\Delta V_A)+c_2 f_2(\Delta P_A)+c_3 f_3(\Delta Q_A)$.
% \begin{equation}\label{eq21}{35}
%    \big|U_D(\bm a_D,y)-U_D(\bm a_D,y')\big|
% \le \alpha(\rho_t)\,P_{\mathrm{det}}(\bm a_D)\,(R_{\max}L_f+L_R)\,L_A\,\|\widehat{\bm\theta}-\bm{\mu}_{\tau}\|_2
% \le \alpha_{\max}P_{\max}(R_{\max}L_f+L_R)L_A\,\|\widehat{\bm\theta}-\bm{\mu}_{\tau}\|_2. 
% \end{equation}
% \end{figure*}
This formulation embeds all attacker types into a common  latent space. Regardless of the underlying mechanism of each attack type, their observable characteristics are modeled consistently along the behavioral dimensions $\bm{\theta}_{\text{behave}}= [a, s,  k, t_{0}]$ of aggressiveness and stealth.

\subsubsection{Modeling the Belief with a Gaussian Process}
The defender’s belief over attacker behavior is modeled as a Gaussian process (GP) defined over the unified input space $\bm{\theta} \in \mathbb{R}^{5}$. This belief function is modelled as:
\begin{equation}\label{gp}
  f(\bm{\theta}) \sim \mathcal{GP}(m(\bm{\theta}), k(\bm{\theta}, \bm{\theta}')).  
\end{equation}
The function $f(\bm{\theta})$ outputs a belief score, which represents the likelihood that an attacker with profile $\bm{\theta}$ is responsible for the observed behavior. The mean function $m(\bm{\theta})$ reflects prior knowledge, often initialized as zero when no prior bias exists. The kernel function $k(\bm{\theta},\bm{\theta}')$ captures similarity between attacker profiles and governs the smoothness of the belief surface in the attacker space.

This modeling approach enables the defender to generalize across attacker types and behaviors, while systematically allowing belief updates through the GP posterior.
\subsubsection{Choosing the Kernel Function}
To define the belief function as a GP over the unified attacker space $\bm{\theta} = ( a, s, k, t_0, {\theta}_{\text{type}})$, we need to construct a composite kernel that captures both categorical type similarity in $\bm{\theta}_{\text{type}}$ and behavioral proximity in ${a, s}$. The kernel is expressed as:
\begin{equation}
k(\bm{\theta}, \bm{\theta}') = k_{\text{type}}({\theta}_{\text{type}}, {\theta}'_{\text{type}}) \cdot k_{\text{behave}}(\bm{\theta}_{\text{behave}}, \bm{\theta}_{\text{behave}}').
\end{equation}
The function $k_{\text{type}}$ handles the similarity between attacker types. It may be defined as an indicator function (for e.g. 1 if the types are identical, $\rho\in [0,1]$ otherwise), or as a smooth kernel (e.g., exponential decay) if some similarity is assumed between types.

The behavioral kernel $k_{\text{behave}}$ measures similarity in the continuous $\bm{\theta}_{\text{behave}}$ space. A common choice is the squared exponential (RBF) kernel:
\begin{equation}\label{eq16}
\small
k_{\mathrm{behave}}\bigl(\bm{\theta}_{\mathrm{behave}},\,\bm{\theta}_{\mathrm{behave}}'\bigr)
= \exp\!\Bigl(-\tfrac{1}{2\ell^2}\,
\|\bm{\theta}_{\mathrm{behave}} - \bm{\theta}_{\mathrm{behave}}'\|^2\Bigr),
\end{equation}
This structure enables the GP to learn belief functions that reflect smooth variations in attacker behavior, while distinguishing between different attack types.

% --- Place in preamble if not already present ---
% \usepackage{amsthm,amsmath,amssymb}
% \newtheorem{lemma}{Lemma}
{
\begin{lemma}\label{lem:mixed_kernel_psd_rho}
Let $\bm{\theta}=(\bm{x},c)$ where $\bm{x}\in\mathbb{R}^d$ and
$c\in\mathcal C$ is categorical. Consider the kernel
\begin{equation}
k(\bm{\theta},\bm{\theta}')
= k_{\text{type}}(c,c')\,k_{\text{behave}}(\bm{x},\bm{x}'),
\end{equation}
where
\[
k_{\text{type}}(c,c')=
\begin{cases}
1,& c=c'\\
\rho,& c\neq c'
\end{cases},
\quad 0\le\rho\le1,
\]
\[
k_{\text{behave}}(\bm{x},\bm{x}')
=\exp\!\left(-\frac{\|\bm{x}-\bm{x}'\|^2}{2\ell^2}\right).
\]
Then $k$ is positive semidefinite on $\mathbb{R}^d\times\mathcal C$ and
therefore defines a valid Gaussian process covariance function.
\end{lemma}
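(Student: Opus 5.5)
The plan is to use the Schur product theorem: a pointwise product of two positive semidefinite kernels on the same index set is again positive semidefinite. It therefore suffices to show that $k_{\text{type}}$ and $k_{\text{behave}}$, each lifted to the product space $\mathbb{R}^d\times\mathcal C$, are PSD. The lifts are $(\bm{\theta},\bm{\theta}')\mapsto k_{\text{type}}(c,c')$ and $(\bm{\theta},\bm{\theta}')\mapsto k_{\text{behave}}(\bm{x},\bm{x}')$. A lift is PSD whenever the original kernel is, because its Gram matrix on points $\bm{\theta}_1,\dots,\bm{\theta}_n$ is the Gram matrix of the original kernel evaluated at the (possibly repeated) projected points.

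For the type kernel I will write it explicitly as
\[
k_{\text{type}}(c,c')=\rho+(1-\rho)\,\mathbbm{1}[c=c'].
\]
The constant kernel $\rho\ge0$ is PSD, since its Gram matrix is $\rho\,\bm{1}\bm{1}^\top$. The indicator kernel is PSD because $\sum_{i,j}v_iv_j\mathbbm{1}[c_i=c_j]=\sum_{c\in\mathcal C}\bigl(\sum_{i:c_i=c}v_i\bigr)^2\ge0$. A sum of PSD kernels with nonnegative coefficients ($\rho\ge0$ and $1-\rho\ge0$) is PSD. This is exactly where the hypothesis $0\le\rho\le1$ is used, so it is the only step that needs care.

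For the RBF kernel I will cite Bochner's theorem: the Gaussian is the Fourier transform of a nonnegative Gaussian spectral density. Equivalently, I can expand
\[
\exp\!\left(-\frac{\|\bm{x}-\bm{x}'\|^2}{2\ell^2}\right)=e^{-\|\bm{x}\|^2/2\ell^2}\,e^{\bm{x}^\top\bm{x}'/\ell^2}\,e^{-\|\bm{x}'\|^2/2\ell^2}.
\]
Here $e^{\bm{x}^\top\bm{x}'/\ell^2}$ is a power series with nonnegative coefficients in the PSD linear kernel, hence PSD by Schur's theorem and closure under limits. Multiplying by $g(\bm{x})g(\bm{x}')$ preserves PSD-ness.

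Finally, I will apply the Schur product theorem to the two lifted Gram matrices. Their Hadamard product is exactly the Gram matrix of $k$, so it is PSD. Symmetry of $k$ is immediate, so $k$ is a valid covariance function. By the Kolmogorov extension theorem, a zero-mean GP with this covariance exists, which is the model used in \eqref{gp}. I do not expect a real obstacle; the main point is the decomposition of $k_{\text{type}}$, which fails if $\rho\notin[0,1]$.
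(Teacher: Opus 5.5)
Your proposal is correct and takes essentially the same route as the paper: write $k_{\text{type}}=\rho+(1-\rho)\delta(c,c')$ as a nonnegative combination of the constant and delta kernels, use that the RBF kernel is PSD, and apply the Schur product theorem to the Hadamard product $\mathbf K_{\text{type}}\circ\mathbf K_{\text{behave}}$. Your version adds details the paper leaves implicit: the lift of each factor kernel to the product space, the explicit check that the indicator kernel is PSD, and a proof that the RBF kernel is PSD.
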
}
% \vspace{-0.8cm}
{
\begin{proof}
Rewrite
\[
k_{\text{type}}(c,c')=\rho+(1-\rho)\delta(c,c'),
\]
where $\delta(c,c')=\mathbb{I}[c=c']$. Both the constant kernel and the
delta kernel are PSD, hence $k_{\text{type}}$ is PSD for $\rho\in[0,1]$.
The RBF kernel $k_{\text{behave}}$ is PSD on $\mathbb{R}^d$.
The composite Gram matrix satisfies
$\mathbf K=\mathbf K_{\text{type}}\circ\mathbf K_{\text{behave}}$.
By the Schur product theorem, $\mathbf K\succeq0$, hence $k$ is PSD.
\end{proof}}

\subsubsection{Updating $\beta(\bm{\theta})$ based on observations}
As the system interacts with the environment during an episode $\tau$, the defender observes the outcomes associated with specific configurations of attacker behavior. Each interaction is represented as $\bm{\theta}^{\tau} = (a^{\tau}, s^{\tau}, k^{\tau}, t_{0}^{\tau},{\theta}_{\text{type}}^{\tau})$, along with detector outputs $\bm{o}^{\tau}=[{C_{\mathrm{det}}^{\text{attack}}}^{\tau}, {P_{\mathrm{det}}^{\text{attack}}}^{\tau}, {\theta}_{\mathrm{det}}^{\tau} ]$, that is, attack detection confidence, probability of detection of attack, and detector label for the type of attack, respectively, forming a data set: $\mathcal{D} = \{(\bm{\theta}_i, \bm{o}_i)\}_{i=1}^{T}$.

The Gaussian Process (GP) prior defined in \eqref{gp} is then conditioned on the  dataset $\mathcal{D}$ to yield a posterior belief function:
\begin{equation}
 \beta(\bm{\theta}) \mid \mathcal{D} \sim \mathcal{N}(\mu_*(\bm{\theta}), \sigma_*^2(\bm{\theta})),   
\end{equation}
where $\mu_*(\bm{\theta})$ represents the updated belief score, a smooth estimate of how likely an attacker with profile $\bm{\theta}$ is to explain the observed outcomes. The term $\sigma_*^2(\bm{\theta})$ captures the epistemic uncertainty in this belief, indicating the confidence the model has about each region of the attacker space.

{\color{black}
The posterior samples are summarized by their mean \(\bm\mu_{\tau}\) and
covariance \(\bm\Sigma_{\tau}\). These moments define the Gaussian belief
propagated to the next episode, which is the
\(\mathrm{MomentMatch}(\cdot)\) operation in Algorithm~\ref{alg:gt-rl}. The
hidden continuous parameters \([a,s,k,t_0]\) are retained for attack generation
and offline belief-tracking diagnostics; the defender-side update uses the
detector outputs \(\bm o^\tau\).
}

\section{Reinforcement Learning Model Formulation}\label{sec6}  
The RL agent finds optimal perturbation magnitudes by maximizing the reward function $r(\bm{a}_{RL})$. Consequently, the action space of the RL agent is defined as $\bm{a}^{d}_{RL}(t)=[\Delta P_{RL}^{d}(t),\Delta Q_{RL}^{d}(t), t_{RL}^{d}]$. 

{The state space for the RL agent is just the set of instantaneous voltage measurements at actor nodes, their variance, and a risk factor ($\rho$) : $\bm{z}(t) = \left[V_o(t),\sigma^2(t), \rho(t)\right]$.}

The RL agent interacts with the current environment and decides upon the perturbation magnitude to inject $\bm{a}_{RL}(t)= [\Delta P_{RL}, \Delta Q_{RL}, t_{RL}]$, following a pre-defined one-time pad as the base sequence to scale for detection.
% \subsection{Reinforcement Learning agent's reward function} 
We define the RL agent's reward function as:
\begin{align}\label{eq:rl_reward}
  \notag & r(\bm{z}(t), \bm{a}_{\text{RL}}(t)) = \mathcal{A}(\bm{a}_{RL},t) -  \mathcal{C}(\bm{a}_{RL},t) - \notag\\  &   \lambda_D \left( \left\| \bm{a}_{\text{RL}}(t)
  - \bm{a}_{D}^{*^{\text{perturb}}}(t) \right\|_2^2 \right) 
  - \lambda_\rho  \rho(t)\left\| \bm{a}_{\text{RL}}(t) \right\|_2^2, 
\end{align}
where $\mathcal{A}(\bm{a}_{\text{RL}},t)$ denotes the attack‐detection term defined as 
$\mathcal{A}(\bm{a}_{\text{RL}},t) = \bigl(p(\mathcal{H}_1)\bigr) \cdot 
\begin{cases}
1, & \text{if } (|\mathcal{T}_t| > \gamma)\,|\,\bm{x}\neq0,\\
0, & \text{otherwise},
\end{cases}$
with $\mathcal{T}_t$ representing the detection threshold and $\gamma(P_{\mathrm{FA}})$ the false‐alarm threshold derived from the probability of false alarm $P_{\mathrm{FA}}$.  
This term quantifies whether the detector’s confidence in identifying the injected perturbation sequence exceeds the false‐alarm threshold.  

The cost of the perturbation selected by the RL agent is given by 
$\mathcal{C}(\bm{a}_{\text{RL}},t) = k_i \lVert \Delta V_o(t) \rVert_2^2 + k_c \lVert \Delta P_{\text{RL}}(t), \Delta Q_{\text{RL}}(t) \rVert_2^2$, 
which measures the total system‐voltage deviation in response to the agent’s power injection.  
Here, $\bm{a}_{D}^{*}$ represents the optimal perturbation strategy derived from the game‐theoretic model, defined as $\bm{a}_{D}^{*} = [\bm{a}_{D}^{*^{\text{perturb}}}, \rho(t)]$, and $\lambda_D$ is the game‐theoretic alignment weight.  
Finally, the term $\lambda_\rho\, \rho(t)\lVert \bm{a}_{\text{RL}}(t) \rVert_2^2$ penalizes the RL agent for selecting DERs that have a high likelihood of being under attack.
% \subsection{Policy Update}
For policy updates, we implement proximal policy optimization (PPO) \cite{schulman2017proximal} and twin delayed deep deterministic policy gradient (TD3) \cite{fujimoto2018addressing}.

%3.3
{Perturbation magnitudes are explicitly penalized in the GT module through a risk-weighted quadratic cost $\mathcal{C}$ in \eqref{def_utility}, which discourages physically minimum power possible actions. Moreover, all perturbations are implicitly bounded by inverter capability limits on active and reactive power injections $ P_{\min}^d \le |\Delta P_{RL}^d(t)| \le P_{\max}^d,\;
Q_{\min}^d \le |\Delta Q_{RL}^d(t)| \le Q_{\max}^d$. All modern inverters allow external programs to read inverter ratings, parameters, and measurements via communication protocols such as SunSpec Modbus and DNP3 at up to 1s time resolution. These protocols also support modifying the inverter's active and reactive power setpoints, making these perturbations practical to implement. The RL agent's actions 
$\bm{a}_{\text{RL}}$ is further regularized to align with the GT perturbations $\bm{a}_{D}^{*^{\text{perturb}}}$, ensuring that the learned policy injects only small, low-power setpoint adjustments consistent with practical grid operation.}
\begin{algorithm}[t]
\caption{Bayesian Stackelberg Game + RL Defense}\label{alg:gt-rl}
\begin{algorithmic}[1]
\State Initialize belief $(\boldsymbol{\mu},\boldsymbol{\Sigma})$, episode $\tau\gets0$
\For{$\tau=1:T_{\max}$}

\State Obtain detector output:
$\hat{\mathbf{o}}^{\tau-1}\leftarrow
\begin{cases}
\mathrm{Detector}(\mathbf{s}^{\tau-1},\bm a_{\mathrm{RL}}^{\tau-1}) & \text{if available}\\
f_\varphi(\bm a_A^{\tau-1}) & \text{otherwise}
\end{cases}$

\State GP likelihood:
$p_{\mathrm{GP}}(\hat{\mathbf{o}}^{\tau-1}\mid\boldsymbol{\theta})
=\mathcal N(\boldsymbol{\mu}_{\mathrm{GP}},\boldsymbol{\Sigma}_{\mathrm{GP}})$

\State Sample attacker profile:
$\boldsymbol{\theta}^{\tau}\sim
\mathrm{HMC}\big(\log[\mathcal N(\boldsymbol{\theta};\boldsymbol{\mu},\boldsymbol{\Sigma})
\,p_{\mathrm{GP}}]\big)$

\State Defender Stackelberg action:
$\bm a_D^{\tau}=\arg\max_{\bm a_D}\beta(\boldsymbol{\theta})
U_D(\bm a_D,\bm a_A^*(\boldsymbol{\theta}^{\tau}))$

\State RL action:
$\bm a_{\mathrm{RL}}^{\tau}\sim\pi_\phi(\bm z^{\tau-1},\bm a_D^{\tau})$

\State Attacker best response:
$\bm a_A^{\tau}=\arg\max_{\bm a_A}U_A(\bm a_A(\boldsymbol{\theta}),\widehat{\boldsymbol{\theta}})$

\State Environment transition:
$(\mathbf{s}^{\tau},r^{\tau-1})\leftarrow
\mathrm{Env}(\mathbf{s}^{\tau-1},\bm a_{\mathrm{RL}}^{\tau},\bm a_A^{\tau})$

\State Update detector nets:
$\psi\leftarrow\mathrm{TrainDetector}(\psi),\quad
\varphi\leftarrow\mathrm{TrainDetector}(\varphi)$

\State Belief update:
$(\boldsymbol{\mu},\boldsymbol{\Sigma})\leftarrow
\mathrm{MomentMatch}(p(\boldsymbol{\theta}\mid\hat{\mathbf{o}}^{1:\tau}))$

\State RL policy update:
$\phi\leftarrow\mathrm{RLUpdate}(\phi)$

\EndFor
\end{algorithmic}
\end{algorithm}
\begin{figure}[t]
    \centering
    \includegraphics[width=1\linewidth]{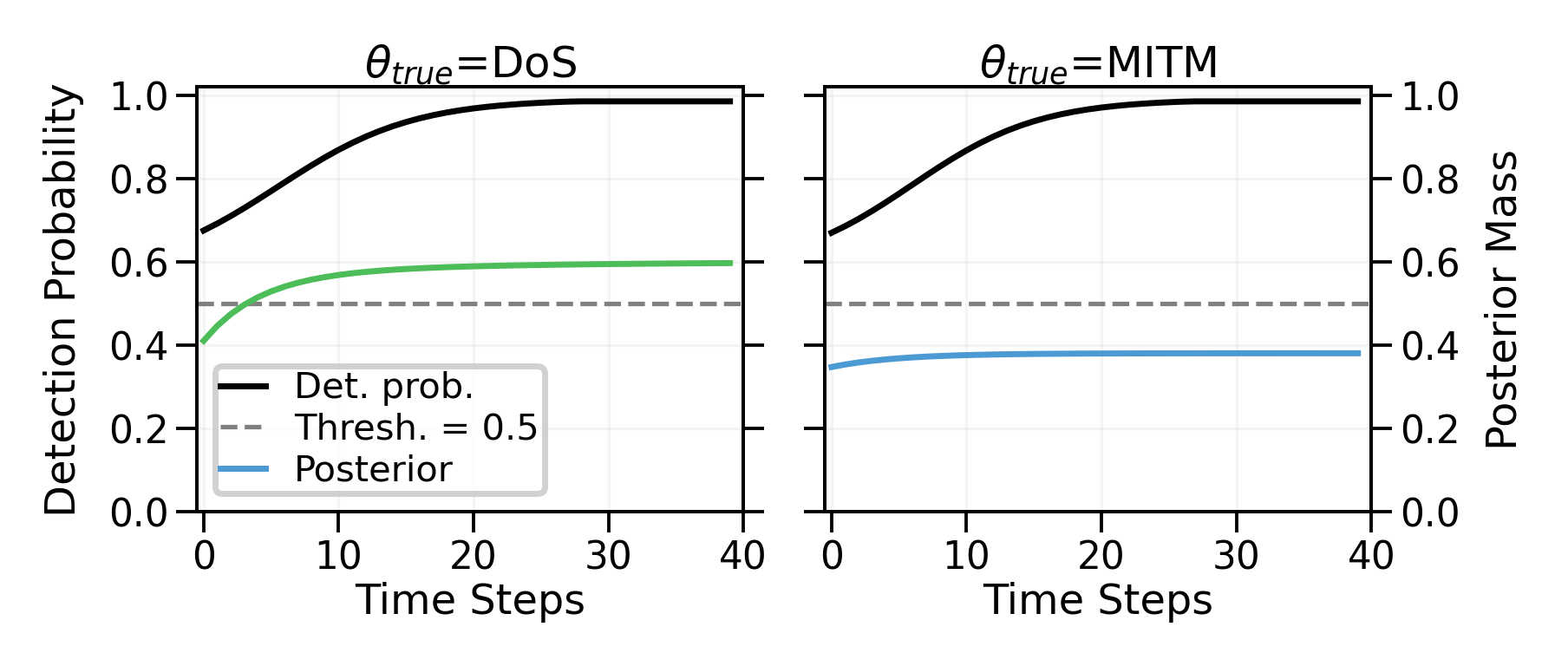}
    \caption{Defender’s detection probability on the IEEE 13-Bus system.}\label{det_mitm}
\end{figure}

{\color{black}
\textcolor{black} {
The Bayesian Stackelberg formulation in Section~V establishes that, for the
current defender belief, an equilibrium-informed perturbation action exists.
The RL formulation above then specifies how this game-theoretic action enters
the realized defense through the shaped reward in~\eqref{eq:rl_reward}.
It remains to characterize how uncertainty in the defender's belief and the
RL realization of the Stackelberg-guided action jointly affect the resulting
defender utility. The following theorem therefore connects the
moment-matched belief \((\bm{\mu}_{\tau},\bm{\Sigma}_{\tau})\), the
corresponding Stackelberg perturbation, and the perturbation ultimately
realized by the RL policy. It provides an episode-level bound on the resulting
utility gap in terms of two quantities already present in the framework:
attacker-profile uncertainty represented by the posterior covariance and
deviation of the RL action from the Stackelberg-guided perturbation controlled
by the alignment weight \(\lambda_D\).
}

\begin{theorem}\label{thm:regret_param_error}
Let \(\beta^\tau(\bm\theta)=\mathcal N(\bm\mu_\tau,\bm\Sigma_\tau)\) be the
defender belief after moment matching. Let
\(\bm a_D^{*^{\mathrm{perturb}}}(\bm\mu_\tau,t)\) be the perturbation component
of the belief-guided Stackelberg action, and let \(\bm a_{RL}^\tau(t)\) be the
bounded perturbation action that maximizes the existing shaped reward
\[
\mathcal A(\bm a,t)-\mathcal C(\bm a,t)
-\lambda_D\|\bm a-\bm a_D^{*^{\mathrm{perturb}}}(\bm\mu_\tau,t)\|_2^2
-\lambda_\rho\rho(t)\|\bm a\|_2^2.
\]
Define the one-step utility gap for \(\bm\theta\sim\beta^\tau\) as
\begin{equation}
\begin{split}
r_t^\tau
={}&U_D\!\left(\bm a_D^*(\bm\theta,t),
                 \bm a_A^*(\bm\theta,t)\right)\\
&-U_D\!\left(\bm a_{RL}^\tau(t),
                 \bm a_A^*(\bm\mu_\tau,t)\right).
\end{split}
\end{equation}
Assume that the induced Stackelberg utility is \(L_\theta\)-Lipschitz in the
attacker profile, and that the defender utility and the part of the shaped
reward excluding the GT-alignment term are \(L\)-Lipschitz in the bounded
perturbation action. Then
\begin{equation}\label{eq37}
\|\bm a_{RL}^\tau(t)
-\bm a_D^{*^{\mathrm{perturb}}}(\bm\mu_\tau,t)\|_2
\le \frac{L}{\lambda_D},
\end{equation}
and
\begin{equation}\label{eq38}
\mathbb E_{\beta^\tau}[|r_t^\tau|]
\le
L_\theta\sqrt{\operatorname{tr}(\bm\Sigma_\tau)}
+\frac{L^2}{\lambda_D}.
\end{equation}
\end{theorem}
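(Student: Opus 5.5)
Write \(\bm a^\star:=\bm a_D^{*^{\mathrm{perturb}}}(\bm\mu_\tau,t)\) and \(g(\bm a):=\mathcal A(\bm a,t)-\mathcal C(\bm a,t)-\lambda_\rho\rho(t)\|\bm a\|_2^2\), the part of the shaped reward excluding the GT-alignment term. By hypothesis \(g\) is \(L\)-Lipschitz on the bounded action set. The proof has two steps. First, a strong-convexity comparison gives \eqref{eq37}. Second, a triangle-inequality split of \(r_t^\tau\) into a belief term and an action term gives \eqref{eq38}.

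For \eqref{eq37}, I will compare the maximizer \(\bm a_{RL}^\tau\) with the feasible candidate \(\bm a^\star\). I assume \(\bm a^\star\) is feasible, since it is a bounded perturbation produced by the GT module of Theorem~\ref{equillibrium}. Optimality of \(\bm a_{RL}^\tau\) gives
\[
g(\bm a_{RL}^\tau)-\lambda_D\|\bm a_{RL}^\tau-\bm a^\star\|_2^2\ge g(\bm a^\star).
\]
Rearranging and applying the Lipschitz property,
\[
\lambda_D\|\bm a_{RL}^\tau-\bm a^\star\|_2^2\le g(\bm a_{RL}^\tau)-g(\bm a^\star)\le L\|\bm a_{RL}^\tau-\bm a^\star\|_2 .
\]
Dividing by \(\|\bm a_{RL}^\tau-\bm a^\star\|_2\), or noting the bound is trivial when this norm is zero, yields \eqref{eq37}.

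For \eqref{eq38}, I will write \(V(\bm\theta):=U_D(\bm a_D^*(\bm\theta,t),\bm a_A^*(\bm\theta,t))\) for the induced Stackelberg utility and insert \(V(\bm\mu_\tau)\):
\[
|r_t^\tau|\le |V(\bm\theta)-V(\bm\mu_\tau)|+\bigl|U_D(\bm a_D^*(\bm\mu_\tau,t),\bm a_A^*(\bm\mu_\tau,t))-U_D(\bm a_{RL}^\tau,\bm a_A^*(\bm\mu_\tau,t))\bigr|.
\]
\emph{Belief term.} The \(L_\theta\)-Lipschitz assumption bounds the first term by \(L_\theta\|\bm\theta-\bm\mu_\tau\|_2\). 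Jensen's inequality then gives \(\mathbb E_{\beta^\tau}\|\bm\theta-\bm\mu_\tau\|_2\le\sqrt{\mathbb E\|\bm\theta-\bm\mu_\tau\|_2^2}=\sqrt{\operatorname{tr}(\bm\Sigma_\tau)}\), which produces the \(L_\theta\sqrt{\operatorname{tr}(\bm\Sigma_\tau)}\) contribution.

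\emph{Action term.} The second term does not depend on \(\bm\theta\), so its expectation equals the term itself. Here I need that the leader action enters \(U_D\) only through its perturbation component, so that the two arguments differ only by \(\bm a^\star\) versus \(\bm a_{RL}^\tau\). The vigilance component \(\rho(t)\) is shared: it is the same \(\rho(t)\) that appears in the RL reward. Lipschitzness of \(U_D\) in the perturbation action then bounds the term by \(L\|\bm a_{RL}^\tau-\bm a^\star\|_2\), and \eqref{eq37} turns this into \(L^2/\lambda_D\). Adding the two contributions gives \eqref{eq38}.

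The main obstacle is making this identification of the leader's action with its perturbation component rigorous. The clean route is to read \(\bm a_{RL}^\tau(t)\) as shorthand for \([\bm a_{RL}^\tau,\rho(t)]\), with \(U_D\) Lipschitz in the perturbation coordinates for fixed \(\rho\). A second, smaller point is that the Lipschitz constant of \(V\) in \(\bm\theta\) is assumed rather than derived. Berge's theorem only gives upper semicontinuity, so I will state the bound conditionally on that hypothesis as the theorem does, and not attempt to derive it from Theorem~\ref{equillibrium}.
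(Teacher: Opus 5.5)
Your proposal is correct and matches the paper's proof step for step. Comparing the maximizer $\bm a_{RL}^\tau(t)$ with the feasible action $\bm a_D^{*^{\mathrm{perturb}}}(\bm\mu_\tau,t)$, where the alignment penalty vanishes, gives \eqref{eq37}; adding and subtracting the belief-guided Stackelberg utility, applying the $L_\theta$- and $L$-Lipschitz bounds, Jensen's inequality and \eqref{eq37} then gives \eqref{eq38}. Your explicit reading of $\bm a_{RL}^\tau(t)$ as $[\bm a_{RL}^\tau,\rho(t)]$, so that $U_D$ compares only perturbation components, makes precise a point the paper leaves implicit.
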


\begin{proof}
Because \(\bm a_{RL}^\tau(t)\) maximizes the shaped reward, its reward is no
smaller than the reward obtained by the feasible comparison action
\(\bm a_D^{*^{\mathrm{perturb}}}(\bm\mu_\tau,t)\). The GT-alignment penalty is
zero for the comparison action. Therefore,
\[
\lambda_D
\|\bm a_{RL}^\tau(t)-
\bm a_D^{*^{\mathrm{perturb}}}(\bm\mu_\tau,t)\|_2^2
\le
L\|\bm a_{RL}^\tau(t)-
\bm a_D^{*^{\mathrm{perturb}}}(\bm\mu_\tau,t)\|_2,
\]
which gives \eqref{eq37}. Next, add and subtract the defender utility obtained from the
belief-guided Stackelberg action. The attacker-profile contribution is bounded
by \(L_\theta\|\bm\theta-\bm\mu_\tau\|_2\), using the attacker map in
\eqref{attackmap} and the induced-utility Lipschitz assumption. The action
contribution is bounded by
\(L\|\bm a_{RL}^\tau(t)-
\bm a_D^{*^{\mathrm{perturb}}}(\bm\mu_\tau,t)\|_2\). Moment matching gives
\[
\mathbb E_{\beta^\tau}
\|\bm\theta-\bm\mu_\tau\|_2
\le
\sqrt{\mathbb E_{\beta^\tau}
\|\bm\theta-\bm\mu_\tau\|_2^2}
=
\sqrt{\operatorname{tr}(\bm\Sigma_\tau)}.
\]
Combining this relation with \eqref{eq37} yields \eqref{eq38}.
\end{proof}

}

\textcolor{black} {
Theorem~\ref{thm:regret_param_error} formalizes the closed-loop coupling between belief inference,
game-theoretic guidance, and RL action selection. The first term in the bound
quantifies the contribution of uncertainty in the inferred attacker profile,
while the second quantifies the deviation between the realized RL perturbation
and the Stackelberg-guided action. Hence, improved concentration of the
defender belief and stronger GT--RL alignment tighten the corresponding terms
of the bound. The result is an episode-level characterization of the existing
GT--RL mechanism and does not assert global convergence of the nonlinear RL
training dynamics.
}

\section{Game-RL Hybrid Architecture}\label{sec7}  
{
The proposed architecture couples Bayesian Stackelberg reasoning with RL agent through both the policy input and the learning objective. At each episode, the game-theoretic module uses the current belief over attacker profiles to compute an equilibrium-informed perturbation action. This action is then provided to the RL policy as a strategic guidance signal and is also incorporated into the reward through an alignment term. Consequently, the RL agent does learns to adapt the Stackelberg-guided action using real-time voltage observations, detector feedback, and evolving operating conditions. The sequential Bayesian Stackelberg formulation provides a tractable abstraction of the leader--follower interaction over the episode interval \([t_\tau,T]\), while the RL policy translates this strategic guidance into an adaptive perturbation action for the physical DER environment.}

\subsubsection{Defender’s Commitment (Phase 1) $[t_\tau:t_{D}]$}
% \item Let $1:T$ denote the duration of the episode for the RL agent estimates the perturbations based on GT's suggestions.  
At the start of the episode, the defender maintains a belief $\beta^\tau(\bm{\theta})$ over attacker's profile. Let us lay a GP prior $\beta^\tau(\bm{\theta}) \sim GP(m(\bm{\theta}), k(\bm{\theta},\bm{\theta}'))).$
Based on this belief and the current system state $s_0$ (e.g., voltage levels, DER status), a Stackelberg Game theoretic model proposes a candidate action:
\begin{align}
     \notag {\bm{a}}_D^{\tau} & = \arg\max_{\bm{a}_{D}} \sum_{\bm{\theta}} \beta^{\tau}(\bm{\theta}) \cdot \sum_{d \in \mathcal{D}} \bigg( \sum_{t=t_{\tau}}^{t_{D}} U_D(\bm{a}_{D}, \hat{\bm{a}}_A^*(\bm{\theta},t)) \\
      & + \sum_{t=t_{A}}^{T} \mathbb{E}(\hat{U}_{D}(\bm{a}_{D}, \hat{a}_A^*(\bm{\theta},t)) \bigg)
\end{align}
However, the final perturbation strategy $\bm{a}_{RL}^\tau$ is selected by the defender's RL policy, which adapts over time and chooses how much to lean on the GT suggestion. Formally, 
\begin{equation}
 \bm{a}_{\mathrm{RL}}^{\tau} \sim \pi_{\phi}(\bm{a}|\bm{z}, {\bm{a}}_D^{\tau}).
\end{equation}
The actions $\bm{a}_{RL}^{\tau}$ for phase 2 are applied to the network. \\
{\textbf{Note.} The prescribed \(P_{FA}\) in detector controls the false-alarm probability of the
\(V_P\)-signal hypothesis test. An attack false positive occurs when the expected \(V_P\) signal is present under benign operation but is declared absent or
inconsistent. The belief-update and game-theoretic modules reduce such errors indirectly by guiding the RL agent toward more informative perturbation magnitudes, thereby improving the distinguishability of the
\(V_P\) signal from voltage fluctuations and reducing signal misses.}

\subsubsection{Attacker’s Response (Phase 2) $[t_{A}:T]$}
During the phase 2, the perturbations signals from the defender as well as the attack's signals are present. 
{The attacker, whose true type $\bm{\theta}_{\text{true}} \sim \mathcal{N}(\bm{\mu}_{\text{true}}, \mathbf{\Sigma}_{true})$ is unknown to the defender}, {observes whether the voltage packets it injected have been modified (in the case of FDI) or the active/reactive power setpoints it manipulated have been overwritten by the controller (in the case of MitM). This allows the attacker to infer the presence of defensive monitoring and adapt its action accordingly.} Given these observations $\tilde{\bm{o}}$ and its internal model, the attacker selects its optimal strategy:
\begin{figure}[t]
    \centering
    \includegraphics[width=1.02\linewidth]{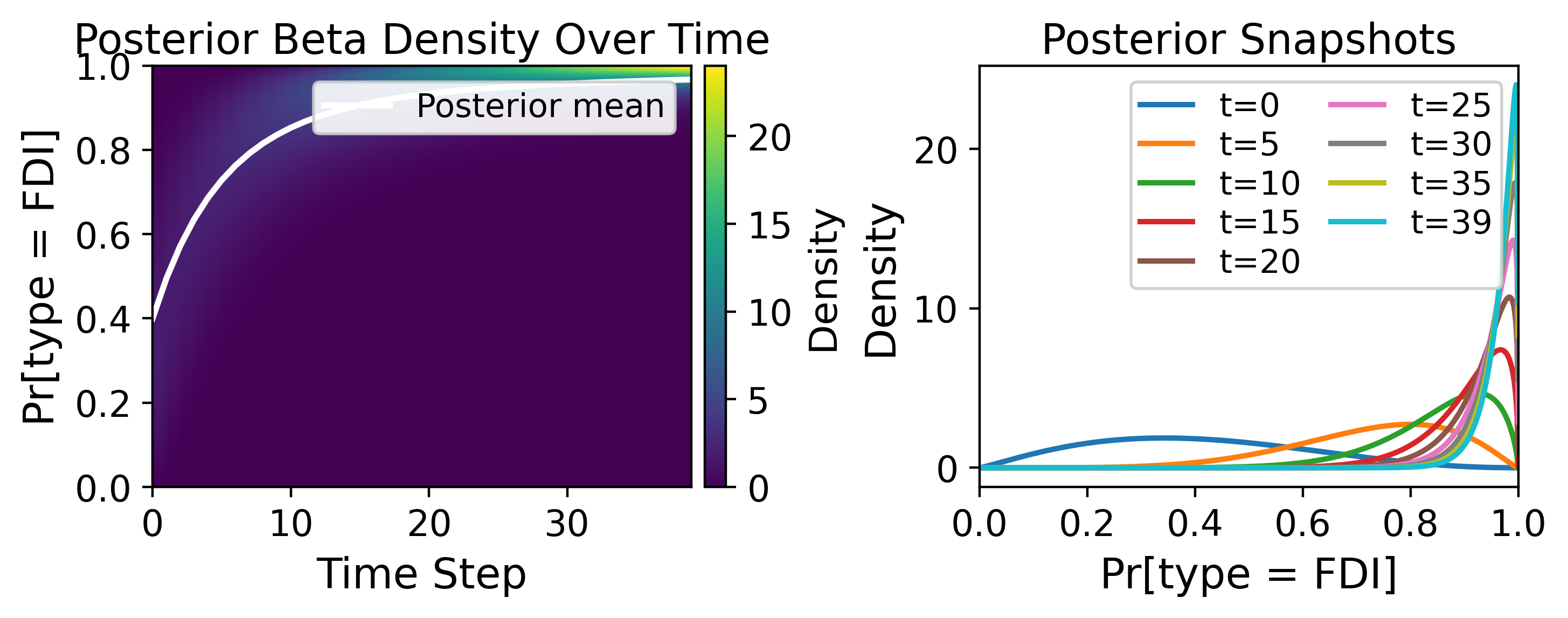}
    \caption{Defender’s evolving belief distribution on the IEEE 13-Bus system.
The dashed lines show the $\pm 2\sigma_{*}({\theta}_{\mathrm{det}})$ region around the mean of posterior belief distribution, $\mu_{*}({\theta}_{\mathrm{det}})$.}\label{fig:perturbation_magnitude_change}
\end{figure}
\begin{equation}
    \mathbf{a}^{*}_A = \arg\max_{\mathbf{a}_{A}} \; \sum_{t_{A}}^{T} U_A(\mathbf{a}_{A}(\bm{\theta}_{\mathrm{true}},t), \tilde{o}^{(\tau-1)}),
    \end{equation}
    where $\tilde{o}^{(\tau-1)}$ are the observations detection labels obtained from the defender's detector module from the prior episode $(\tau-1)$.
\subsubsection{Detection, belief, and policy update}
 Following the attacker’s response, the system state evolves. The defender continuously monitors the system during phase 2 via its detector, which outputs a measurement vector: 
\begin{equation}
    \mathbf{o}^{\tau} = [C_{\mathrm{det}},\, P_{\mathrm{det}},\, {\theta}_{\mathrm{det}}] \in \mathbb{R}^{3d},
\end{equation}
forming a dataset: $\mathcal{D} = \{\bm{\theta}^{\tau}, \bm{o}^{\tau}\}$. 
\begin{figure}
    \centering
\includegraphics[width=1.02\linewidth]{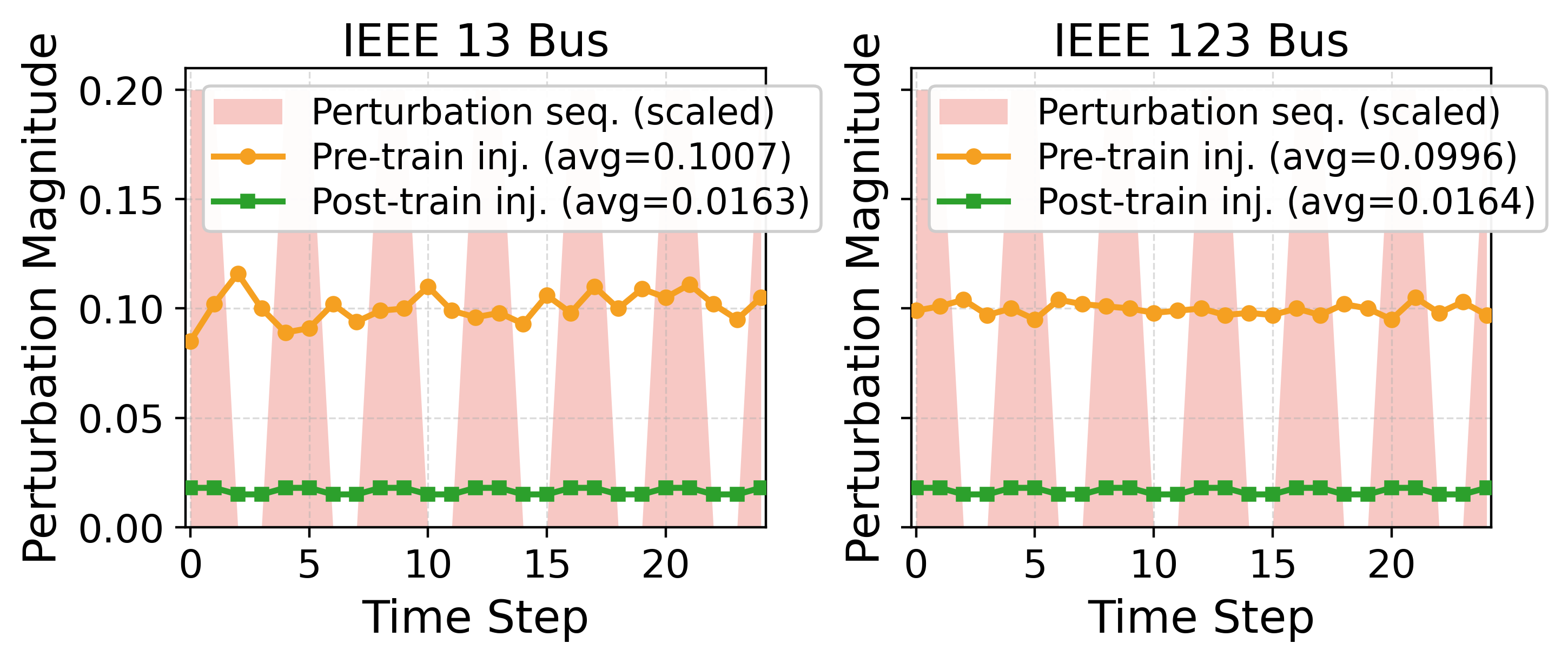}
    \caption{Convergence of reinforcement learning agent behavior across IEEE 13-Bus and 123-Bus systems. Learned perturbation magnitude decreases by an order of magnitude with no impact to detection performance.}
    \label{fig:rl_convergence}
\end{figure}
In this vector, $C_{\mathrm{det}}$ represents the detection confidence, $P_{\mathrm{det}}$ denotes the base detection probability, and ${\theta}_{\mathrm{det}}$ contains attack labels across $d$ dimensions. Using Bayes’ rule to incorporate the new evidence, the defender updates its posterior belief over the potential attack types (assuming $m(\bm{\theta})$ to be zero).

The attacker's profile for the next episode $\bm{\theta}^{\tau+1}$ in the defender's GT can be estimated using maximum a posteriori as follows: 
\begin{equation}
    \bm{\theta}_{\text{MAP}} = \arg\max_{\bm{\theta}} \, p\left(\hat{\beta}^{\tau+1}(\bm{\theta}) \mid \mathcal{D}, l\right),
\end{equation}
where $l$ is the parameter of RBF kernel function in \eqref{eq16}.
The proposed GT-RL method is illustrated in Fig. \ref{depict} and summarized in algorithm \ref{alg:gt-rl}.

{
{\subsection{Potential Extension to Coordinated Multi-DER Attacks}}\label{multider}
A fully general coordinated adversary compromising \(N\) DERs can be modeled by a vector-valued action
\(\bm{a}_A(t)=[a_{A,1}(t),\ldots,a_{A,N}(t)]^\top\), where \(a_{A,i}(t)\) denotes the manipulation applied to DER \(i\) (e.g., measurement tampering for FDI, command/setpoint overwrite for MiTM, or packet dropping for DoS). In its unconstrained form, the attacker best response becomes
\[
\bm{a}_A^{*}(\tilde{\bm{a}}_D^\tau)
=\arg\max_{\bm{a}_A}\sum_{t=t_A}^{T}U_A\!\big(\tilde{\bm{a}}_D^\tau,\bm{a}_A(t),\bm{\theta}_{\text{true}},t\big)\]
such that $\|\bm{a}_A(t)\|\le A_{\max}$,
which is high-dimensional and may be intractable for online equilibrium computation. To retain realism while preserving tractability, coordinated multi-DER attacks can be parameterized in a structured low-dimensional form. One practical representation is a \emph{budgeted subset} model: the attacker selects a subset \(S\subset\{1,\ldots,N\}\) with \(|S|\le B\) (coordination budget) and applies a shared parametric profile \(g(t;a,k,t_0)\), i.e.,
\(a_{A,i}(t)=\mathbb{I}[i\in S]\cdot \alpha_i\, g(t;a,k,t_0)\),
where \(\alpha_i\) are optional per-DER scaling factors; this naturally reuses our parameter \(s\) as a sparsity/budget proxy (e.g., \(B=\lceil sN\rceil\)). An even more compact formulation uses a \emph{low-rank/mode} model,
\[
\bm{a}_A(t)=\sum_{r=1}^{R} w_r\,\bm{u}_r\, g_r(t;\,a_r,k_r,t_{0,r}),
\qquad R\ll N,
\]
where \(\{\bm{u}_r\}\) are fixed coordination patterns (e.g., DER clusters or learned modes) and \(\{w_r\}\) are mixing weights. Under either parameterization, the GT-RL defender formulation extends directly: the attacker profile \(\bm{\theta}\) is augmented to include coordination variables (e.g., \(B\) and a cluster/mode index \(r\), or weights \(\{w_r\}\)), the Stackelberg layer computes a candidate perturbation \(\tilde{\bm{a}}_D^\tau\) under the current belief \(\beta^\tau(\bm{\theta})\), and the RL policy outputs the realized perturbation action \(\bm{a}_{RL}^\tau\sim\pi_\phi(\bm{a}\mid \bm{z},\tilde{\bm{a}}_D^\tau)\). Importantly, the detection mechanism remains footprint-based: it tests consistency of the defender's secret perturbation signature across observer voltages; coordinated attacks are captured by the expanded attacker action model, while belief updates incorporate detector outputs to infer the coordination budget/mode and adapt perturbations accordingly without requiring explicit topology knowledge. 

The coordinated multi-DER formulation is not
implemented or evaluated in the present study; it is included only to outline
a possible extension of the attacker action model.}

\begin{figure*}[t]
    \centering
    \includegraphics[width=\textwidth]
    {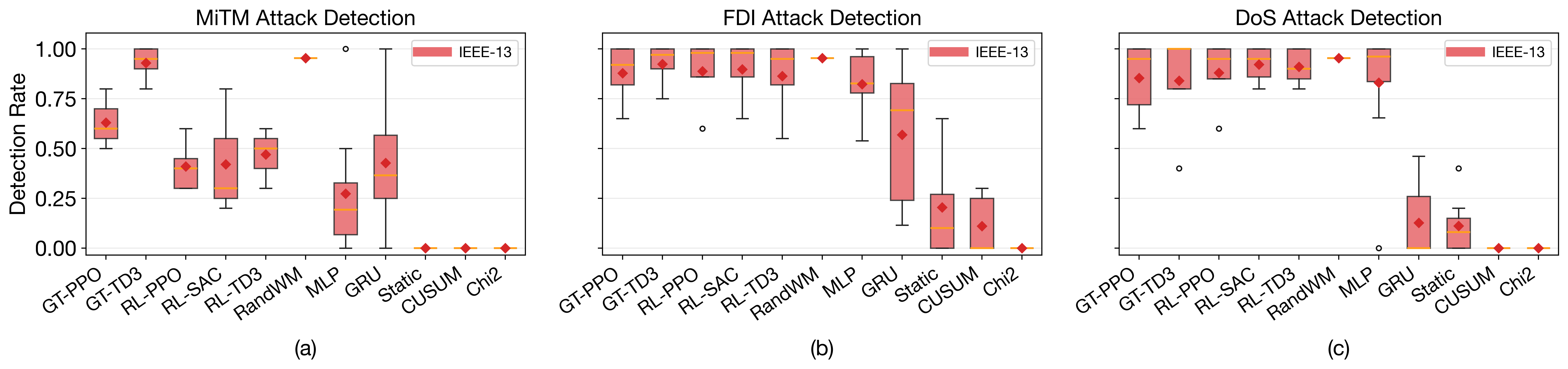}
    \caption{IEEE-13 single-attack baseline comparisons for MiTM, FDI, and DoS attacks. The proposed GT-RL methods are compared with RL-only, fixed-watermarking, machine-learning-based, and residual/statistical detection baselines.}
    \label{fig:ieee13_single_attack_baselines}
\end{figure*}
\begin{figure*}[t]
    \centering
    \includegraphics[width=\textwidth]
    {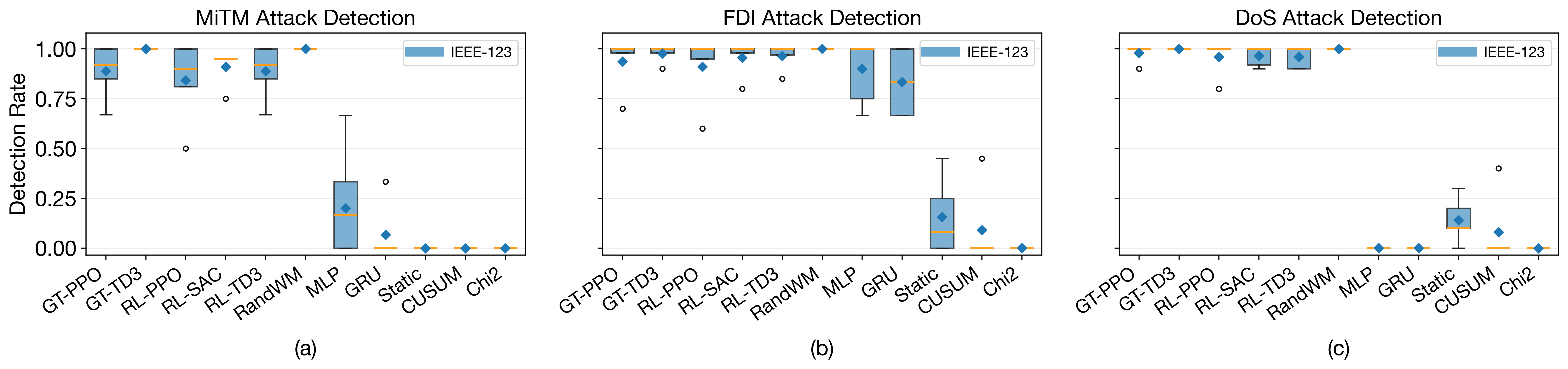}
    \caption{Ablation study contrasting attack detection rates with and without game-theoretic parameter tuning of the RL agent, alongside conventional statistical approaches: (a) MiTM; (b) FDI; (c) DoS attack detection for IEEE 123-Bus system.}
    \label{ablation_123}
\end{figure*}

\section{Simulation and Results}\label{sec8} 

To validate the proposed methodology, we examine whether the attacker’s uncertainty model, formulated via the Stackelberg game within the VPP defense mechanism, enhances the defender’s ability to detect the true attack type. We consider three attack scenarios: MitM, FDI, and DoS. The attacker’s profile parameters $\bm{\theta}$ are mapped to attack signals for the attack scenarios using the design equations in \eqref{attackmap}.
{The IEEE 13-bus and IEEE 123-bus test systems were augmented with inverter-interfaced DERs placed at selected load buses, following standard practices in distribution system studies. Specifically, seven DERs were randomly deployed in the IEEE 13-bus system and sixty-two DERs in the IEEE 123-bus system. Each DER is capable of modulating active and reactive power injections within predefined limits.}

{DERs are coordinated through a centralized VPP abstraction, which aggregates individual DER responses and exposes a unified control interface to the defender. The proposed GT-RL agent operates at this VPP level, determining perturbation magnitudes that are then distributed across actor DERs.
Furthermore, we designed the true attack signals by introducing biases into the underlying distributions of attacker's profile parameters, $\bm{\theta}_{\text{true}}$ allowing us to stress-test the resilience of the models under adversarial conditions.}

% For DoS case, the attacker can manipulate time index or availability without injecting any physical values on the entire action space of the attacker $a^{d}_{A}(t)= [P_{A}^{d}(t), Q_{A}^{d}(t), V_{A}^{d}(t)] \; ;
% \forall d \in \mathcal{D}_{act} $ . 
% 2.5
% { Therefore,
% in practice, we model DoS as a non-selective attack signals that affect all measurement and control channels, degrading availability and timing rather than injecting structured false values. In our implementation, this effect is captured through unstructured, low-energy perturbations that manifest as reduced detection confidence and higher entropy in the inferred attack signature.}
% 2.6
% #2.1

{The estimation of coefficients $\bm{x}=[x_{p}, x_{q}]$ for the detector is performed using \eqref{wls} using weighted least squares (WLS). Please note that their estimation is conducted in a data-driven manner only using the observed perturbation response data, voltage observations $\Delta \bm{V}_{o}$ and perturbations $\Delta \bm{P}, \Delta \bm{Q}$, which does not require the network topology.} {This WLS estimation assumes that, over the short perturbation interval, the voltage response to small setpoint perturbations is approximately linear and locally time-invariant, which is a standard approximation in distribution systems for small deviations around an operating point. Consequently, the perturbation magnitudes are intentionally kept small to remain in the linear regime, and the WLS estimation is performed using normal data to avoid bias from corrupted measurements during attacks.}

First, we show that as the VPP defender updates its belief about the attacker’s profile parameter \(\bm{\theta}\), the detection probability \(P_D\) of the perturbation-based detector improves. The posterior mass of the belief distribution \(\beta(\bm{\theta})\) increases for the true attack class in the case of the FDI attack (Fig.~\ref{det_mitm}). In Fig.~\ref{fig:perturbation_magnitude_change}, we illustrate how the belief distribution over the attack type parameter $\theta_{\text{type}}$ evolves over time for the IEEE 13-bus system. As the belief is updated with new observations, the uncertainty gradually decreases, and the posterior distribution converges toward the true attack type, as shown in the time-snapshot plot.

{
The proposed GT-RL variants are evaluated using two policy-update rules: (i) {GT-PPO}, which integrates Bayesian Stackelberg guidance with PPO; (ii) {GT-TD3}, which combines the GT module with TD3. These methods are compared against the corresponding RL-only baselines: (iii) {RL-PPO}, (iv) {RL-SAC}, and (v) {RL-TD3}. Additionally, we include (vi) {RandWM}, a fixed random watermarking baseline that injects pseudo-random active- and reactive-power perturbations under the same perturbation budget and applies the same perturbation detector; (vii) an {MLP}, which learns window-level watermark-response consistency from aligned clean windows and synthetically corrupted windows; and (viii) a {GRU}, which uses the same training information while preserving the temporal structure of the perturbation-response sequence. The MLP and GRU baselines are trained without oracle access to MITM, FDI, or
DoS labels. Positive samples are aligned defender-perturbation and voltage-
response windows obtained under benign operation. Negative samples are
generated from benign data by masking, delaying, shuffling, or mismatching the
response window, thereby breaking the expected $V_P$-signal relationship. Both
models are trained using binary cross-entropy to estimate $V_P$-signal
consistency. During evaluation, a low consistency probability indicates a
missing or distorted $V_P$ signal. \textcolor{black} {
All compared methods use the same attack distributions, DER sets, detector
windows, and evaluation protocol. Adaptive perturbation methods are subject to
the same admissible active- and reactive-power range and the same hard
DER-capability limits; no method is permitted to exceed these bounds.
}} 

{
We organize the evaluation into two experimental classes: single-attack detection and mixed-attack classification.}

{
In the single-attack setting, the true attacker uses only one attack class throughout the evaluation experiment: FDI-only, MiTM-only, or DoS-only. Each attack class is evaluated separately on the IEEE-13 and IEEE-123 feeders. The results in Fig.~\ref{fig:ieee13_single_attack_baselines} and
Fig.~\ref{ablation_123} therefore report the type-specific detection rate, i.e., the fraction of evaluated attack intervals of the corresponding true attack type that are correctly detected. In this setting, the proposed GT-RL methods, especially the GT-TD3 variant, perform competitively with or better than the RL-only and classical baselines across the MiTM, FDI, and DoS cases.}

{
In the mixed-attack setting, each evaluation episode contains multiple fixed-duration attack intervals, and the active attack type can change across intervals.} 
{
We further evaluate three stress regimes. In the adaptive-attacker regime, the true distribution of the attacker profile parameters evolves across evaluation
episodes. The defender does not observe these parameters directly; instead, it updates its belief from detector feedback through the GP-based Bayesian belief
module. This tests whether the proposed GT-RL defense can track a nonstationary attacker and adjust its perturbation policy accordingly.}

{
In the topology-change regime, the RL policy, detector calibration, and sensitivity model are obtained on the base feeder topology, but evaluation is
performed on a reconfigured topology. For IEEE-13, the evaluation topology is formed by opening the native 671--692 connection and adding an alternate
632--692 tie. For IEEE-123, the evaluation topology is formed by opening Switch~5 and reconfiguring Switch~7 to connect buses 151 and 300. This changes the feeder connectivity and voltage-response sensitivities without changing
the attack-generation protocol. Finally, in the adaptive+topology regime, both sources of distribution shift are applied simultaneously: the attacker profile distribution evolves during evaluation, and the policy calibrated on the base topology is evaluated on the
reconfigured feeder. This setting jointly evaluates the method under cyber-side attacker nonstationarity and physical-side feeder reconfiguration.}

Tables~\ref{tab:mixed_macro_f1_ieee13} and~\ref{tab:mixed_macro_f1_ieee123}
summarize mixed-attack classification performance under adaptive-attacker, topology-change, and adaptive+topology stress regimes. The reported values are macro-F1 scores over the three attack labels, MiTM, FDI, and DoS, averaged over 10 evaluation runs. Thus, the metric evaluates whether the defender correctly classifies the active attack type. 
\textcolor{black} {
Across both feeders, the proposed GT-RL methods retain comparable
macro-F1 performance across these regimes, indicating no systematic
performance degradation under the evaluated attacker-distribution and feeder
configuration shifts. GT-TD3 achieves the highest mean macro-F1 among the
evaluated methods across all three conditions. }

The static-threshold, CUSUM, and \(\chi^2\) baselines are designed primarily
for residual-change detection. Their low macro-F1 values and detection rates indicate that residual magnitude alone does not reliably distinguish MITM, FDI, and DoS, particularly when the perturbation
regressors are corrupted, unavailable, or affected differently across attack
mechanisms. These baselines also use fixed residual thresholds and do not
incorporate attacker-profile beliefs or adaptive perturbation design.

{
Second, we evaluate whether the perturbation magnitudes obtained by the proposed GT-RL algorithm constitute effective decision variables for attack detection. Fig.~\ref{fig:convergence_parameters} illustrates the complete decision pathway for the IEEE-123 feeder. In the first panel, the defender's belief $\hat{\boldsymbol{\theta}}$ changes in response to the active MiTM, FDI, and DoS attacker profiles. Although the estimated aggression parameter is conservative relative to the true attacker mean, it tracks the changes in the active attacker profile and provides updated information to the Stackelberg optimization. }

{
As shown in the second panel, GT-TD3 selects consistently stronger perturbation magnitudes than RL-TD3. The game-theoretic layer uses the estimated attacker profile to influence the  magnitude and DER allocation of the active- and reactive-power perturbations, while the RL reward continues to penalize excessive control effort and voltage deviation. The resulting decision therefore represents an attack-aware compromise between perturbation visibility and its physical cost. The third panel shows that the GT-informed perturbations produce an average detector SNR gain of $0.14$~dB relative to RL-TD3. A higher SNR indicates that the perturbation-induced voltage response is more distinguishable from measurement noise and attack-induced distortion, thereby providing the detector with stronger evidence.
This improvement propagates to the final attack-type decision. As shown in the fourth panel, GT-TD3 achieves an average mixed-attack macro-F1 gain of $0.18$ over RL-TD3 across the evaluation episodes.}

% In early episodes, the defender's posterior over attacker parameters is diffuse; consequently, the RL policy injects larger perturbations to reliably elicit a measurable voltage ``footprint'' across a broad range of plausible attacker behaviors (i.e., a high exploration / high value-of-information regime). As episodes progress, the Stackelberg belief module assimilates detector outputs and the posterior concentrates toward the true attacker profile (as evidenced by decreasing divergence across episodes in Fig. \ref{fig:convergence_parameters}), reducing uncertainty and making perturbations more targeted and predictable. Under this sharper belief, the RL policy can exploit a consistent, type-specific footprint and therefore decrease perturbation amplitude while preserving (or improving) detectability. This trend is reflected in Fig. \ref{fig:rl_convergence}, where the average injection magnitude drops from \(\approx 0.10\) pre-training to \(\approx 0.016\) post-training across both IEEE-14 and IEEE-123.

\begin{figure}[t!]
    \centering
\includegraphics[width=0.99\linewidth]{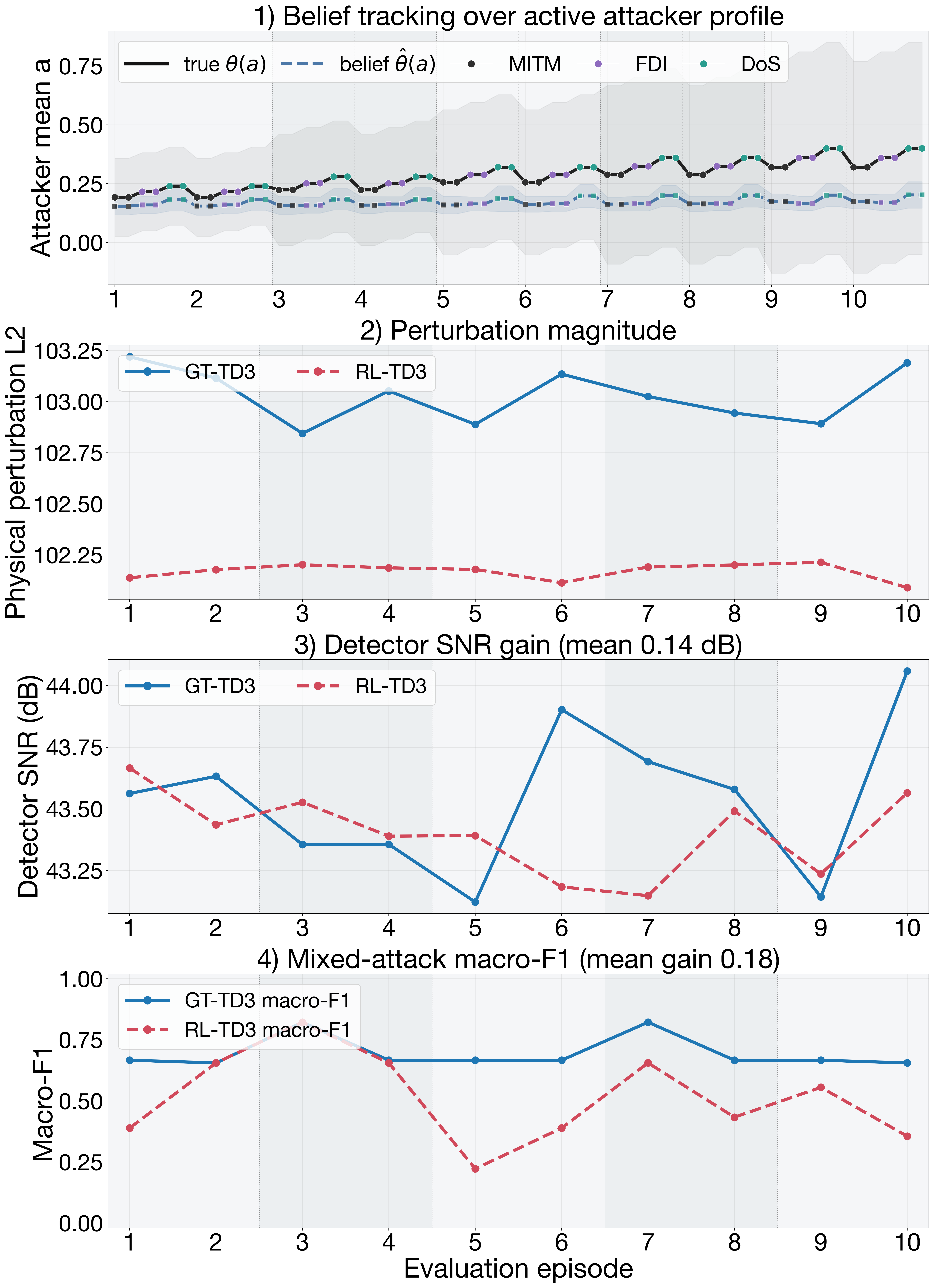}
    \caption{Impact of game-theoretic belief update on 1) attacker-profile parameter distribution on defender end; 2) the resulting perturbation magnitude obtained by RL agent; 3) detector SNR; and 4) mixed-attack macro-F1 across evaluation episodes for the IEEE-123 feeder under adaptive mixed attacks.}
    \label{fig:convergence_parameters}
\end{figure}  

\begin{table}[t]
\centering
\scriptsize
\setlength{\tabcolsep}{5pt}
\renewcommand{\arraystretch}{1.12}
\begin{tabular}{lccc}
\toprule
Method & Adaptive & Topology & Adaptive+Topology \\
\midrule
GT-PPO 
& 0.572 $\pm$ 0.203 
& 0.591 $\pm$ 0.138 
& 0.591 $\pm$ 0.138 \\

GT-TD3 
& \textbf{0.650 $\pm$ 0.136} 
& \textbf{0.699 $\pm$ 0.182} 
& \textbf{0.714 $\pm$ 0.192} \\

RL-PPO 
& 0.625 $\pm$ 0.169 
& 0.652 $\pm$ 0.149 
& 0.670 $\pm$ 0.134 \\

RL-TD3 
& 0.571 $\pm$ 0.158 
& 0.608 $\pm$ 0.136 
& 0.620 $\pm$ 0.154 \\

RL-SAC 
& 0.616 $\pm$ 0.132 
& 0.604 $\pm$ 0.165 
& 0.604 $\pm$ 0.165 \\

RandWM 
& 0.606 $\pm$ 0.081 
& 0.670 $\pm$ 0.112 
& 0.660 $\pm$ 0.110 \\

MLP & 0.507 $\pm$ 0.193 & 0.508 $\pm$ 0.193 & 0.481 $\pm$ 0.178 \\
GRU & 0.428 $\pm$ 0.121 & 0.506 $\pm$ 0.162 & 0.481 $\pm$ 0.145 \\

Static 
& 0.093 $\pm$ 0.151 
& 0.000 $\pm$ 0.000 
& 0.100 $\pm$ 0.161 \\

CUSUM 
& 0.168 $\pm$ 0.025 
& 0.149 $\pm$ 0.061 
& 0.161 $\pm$ 0.040 \\

$\chi^2$ 
& 0.093 $\pm$ 0.151 
& 0.000 $\pm$ 0.000 
& 0.100 $\pm$ 0.161 \\
\bottomrule
\end{tabular}
\caption{Mixed-attack macro-F1 for the IEEE-13 test system. Entries are mean $\pm$ standard deviation over $10$ evaluation runs. Bold marks the best method for each regime.}
\label{tab:mixed_macro_f1_ieee13}
\end{table}

\begin{table}[t]
\centering
\scriptsize
\setlength{\tabcolsep}{5pt}
\renewcommand{\arraystretch}{1.12}
\begin{tabular}{lccc}
\toprule
Method & Adaptive & Topology & Adaptive+Topology \\
\midrule
GT-PPO 
& 0.649 $\pm$ 0.236 
& 0.586 $\pm$ 0.182 
& 0.587 $\pm$ 0.192 \\

GT-TD3 
& \textbf{0.696 $\pm$ 0.067} 
& \textbf{0.717 $\pm$ 0.097} 
& \textbf{0.717 $\pm$ 0.097} \\

% GT-SAC 
% & 0.671 $\pm$ 0.178 
% & 0.648 $\pm$ 0.154 
% & 0.609 $\pm$ 0.173 \\

RL-PPO 
& 0.664 $\pm$ 0.230 
& 0.630 $\pm$ 0.241 
& 0.632 $\pm$ 0.255 \\

RL-TD3 
& 0.513 $\pm$ 0.184 
& 0.520 $\pm$ 0.170 
& 0.520 $\pm$ 0.170 \\

RL-SAC 
& 0.585 $\pm$ 0.191 
& 0.577 $\pm$ 0.203 
& 0.561 $\pm$ 0.178 \\

RandWM 
& 0.502 $\pm$ 0.241 
& 0.498 $\pm$ 0.232 
& 0.517 $\pm$ 0.203 \\

MLP & 0.386 $\pm$ 0.235 & 0.471 $\pm$ 0.238 & 0.389 $\pm$ 0.140 \\
GRU & 0.398 $\pm$ 0.252 & 0.327 $\pm$ 0.219 & 0.419 $\pm$ 0.157 \\

Static 
& 0.267 $\pm$ 0.141 
& 0.267 $\pm$ 0.141 
& 0.267 $\pm$ 0.141 \\

CUSUM 
& 0.163 $\pm$ 0.100 
& 0.199 $\pm$ 0.056 
& 0.163 $\pm$ 0.100 \\

$\chi^2$ 
& 0.267 $\pm$ 0.141 
& 0.267 $\pm$ 0.141 
& 0.267 $\pm$ 0.141 \\
\bottomrule
\end{tabular}
\caption{Mixed-attack macro-F1 for the IEEE-123 test system. Entries are mean $\pm$ standard deviation over $10$ evaluation runs. Bold marks the best method for each regime.}
\label{tab:mixed_macro_f1_ieee123}
\end{table}

% \begin{table*}[t]
% \centering
% \small
% \caption{Performance comparison between GT-RL methods, RL-only baselines, and classical statistical detectors across different cyber-physical attack types. Values are reported as mean (standard deviation).}
% \scalebox{0.85}{
% \begin{tabular}{lcccccccc}
% \toprule
% Attack Type & GT+RL (PPO) & GT+RL (TD3) & RL Only (PPO) & RL Only (SAC) & RL Only (TD3) & Static Threshold & CUSUM & Chi-Square \\
% \midrule
% MITM & 0.26 (0.03) & 0.24 (0.05) & 0.12 (0.01) & 0.15 (0.02) & 0.32 (0.06) & 0.00 (0.00) & 0.00 (0.00) & 0.00 (0.00) \\
% FDI  & 0.22 (0.03) & 0.23 (0.05) & 0.00 (0.00) & 0.00 (0.00) & 0.00 (0.00) & 0.00 (0.00) & 0.00 (0.00) & 0.00 (0.00) \\
% DoS  & 0.59 (0.01) & 0.53 (0.06) & 0.63 (0.04) & 0.56 (0.05) & 0.50 (0.04) & 0.01 (0.00) & 0.00 (0.00) & 0.00 (0.00) \\
% \bottomrule
% \end{tabular}
% }
% \label{topology_changes}
% \end{table*}

{
\paragraph*{Computational latency and real-time deployment}
The proposed GT-RL framework is structured to fit within real-time VPP operational cycles by decoupling \emph{offline training} from \emph{online execution}. In deployment, the VPP does not solve an RL optimization problem at each dispatch step; instead, it executes a single forward pass of the learned policy \(\pi_{\phi}(\cdot)\) to generate the perturbation action, followed by the perturbation detector evaluation and a lightweight belief update (GP posterior/MAP) over a low-dimensional attacker-parameter space. The game-theoretic (Stackelberg) guidance can be computed at a slower cadence than dispatch, for e.g., only when the belief shifts materially, and can be accelerated via warm-starting and a small candidate-action set. As a result, the online compute path (policy inference + detector + belief update) is compatible with typical VPP control loops, including 5-min dispatch intervals, while avoiding repeated topology-dependent optimization during operation.}
{
{\paragraph*{Limitations}
The approach may degrade when operating conditions change abruptly (e.g., large load/DER ramps) or when the feeder undergoes major reconfiguration that significantly alters the local sensitivities; in such cases, the learned coefficients $x_{p,o}$ and $x_{q,o}$ depend on the feeder configuration and local operating point and are therefore not assumed to remain permanently
fixed. Abrupt topology changes or large operating-point shifts may invalidate the current coefficients. In longer-term operation, or when persistent model mismatch is observed, the coefficients can be re-estimated from newly collected perturbation-response measurements without requiring explicit feeder-topology
information. The present analysis establishes equilibrium existence and bounded utility
degradation under belief misspecification, but does not establish global
convergence of the complete GT-RL learning dynamics with nonlinear function
approximation. Finally, since our formulation does not assume topology access, we do not claim invariance to arbitrarily large structural changes; rather, we demonstrate robustness to realistic topology variations and operating-point drift through the added topology-shift experiments and the use of short-horizon, data-driven coefficient updates.}}

{All experiments were conducted using Python and PyTorch, with reinforcement learning implemented via the PPO and TD3 algorithm in Stable-Baselines-3 using the \texttt{MlpPolicy} architecture. Training was initially performed with GPU acceleration on a workstation equipped with two NVIDIA RTX~4090 GPUs; however, due to the modest network and policy sizes, training could also be completed efficiently on CPU with negligible impact on runtime. Unless otherwise stated, the learning rate was set to $10^{-4}$ and identical hyperparameters were used across all test systems to ensure fair comparison.}

% We considered three types of attacks: FDI, MitM, and \textcolor{red}{denial of service (DoS)}. For each attack type, we generate a time-ramped attack sequence and track the defender's posterior mean for the action parameters over time. The main metric is the time-to-threshold, defined as the first time $t$ such that $\mu_{\text{true}}^{(t)} \geq 0.90$, where $\mu_{\text{true}}^{(t)}$ is the posterior mean for the true attack type $\theta_{true}$. 

% \section{Conclusion and Future work}
% In this paper, we propose a topology-agnostic, perturbation-based attack-detection method for VPP-operated DERs: a hybrid cybersecurity framework that combines Bayesian Stackelberg game theory with reinforcement learning (GT-RL) to enable adaptive defense under uncertainty. GT-RL achieves higher detection performance than advanced RL baselines. Using this framework, we show how VPPs can deploy a principled perturbation strategy to defend against a broad class of cyber-physical attacks, including FDI, MiTM, and DoS. As future work, we will extend the method to detect temporally correlated attack signals \cite{mcaleer2023team} and broaden coverage to additional threats such as replay attacks and topology-level FDI.

\section{Conclusion}
{
In this paper, we propose a topology-agnostic perturbationbased
attack-detection framework for VPP-operated DER
fleets that integrates Bayesian Stackelberg game reasoning
with model-free reinforcement learning (GT-RL) to enable
adaptive defense under attacker uncertainty. Across IEEE-
13 and IEEE-123 systems and multiple attack classes (FDI,
MiTM, DoS), GT-RL consistently outperforms standalone RL
baselines (PPO, TD3) and classical detectors while requiring
only local voltage observations available to the VPP. The
game-theoretic belief module progressively concentrates the
posterior over attacker parameters, providing strategic guidance
that improves detection and reduces the perturbation
effort over time.

The approach relies on a locally linearized response model
and sufficient voltage observability, and performance may
degrade under abrupt operating-point shifts, missing data,
or feeder reconfiguration. These limitations motivate periodic
recalibration of response coefficients and more robust
handling of nonstationarity. Future work includes explicit
evaluation under coordinated multi-DER and adaptive
signature-cancellation attackers, which are supported by the
structured/low-rank attacker action extensions described in
Sec.~\ref{multider}. In addition, the future work will also extend the
framework to detect temporally correlated \cite{liang2024game} and stealthier
attack patterns (e.g., replay and topology-level FDI), incorporate
richer observation models and change-point detection for
evolving feeder conditions, and develop stronger theoretical
guarantees on belief convergence and detection under bounded
model mismatch.

}
\bibliography{references}

\begin{IEEEbiography}[{\includegraphics[width=1in,height=1.25in,clip,keepaspectratio]{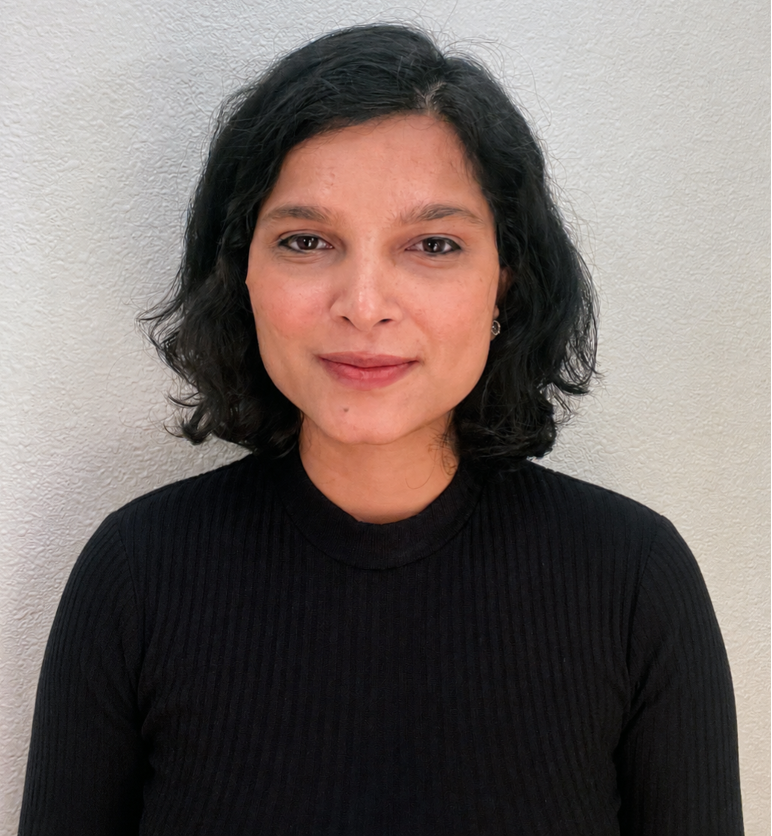}}]{Pooja Algikar}is a Postdoctoral Researcher at Argonne National Laboratory. She received the Ph.D. from Virginia Tech in Electrical and Computer Engineering in 2025. Her research focuses on machine learning for energy systems, graph representation learning, and uncertainty quantification. Her work also spans Bayesian learning, Gaussian processes, robust estimation, game-theoretic security, and Koopman operator methods.
\end{IEEEbiography}

\begin{IEEEbiography}[{\includegraphics[width=1in,height=1.25in,clip,keepaspectratio]{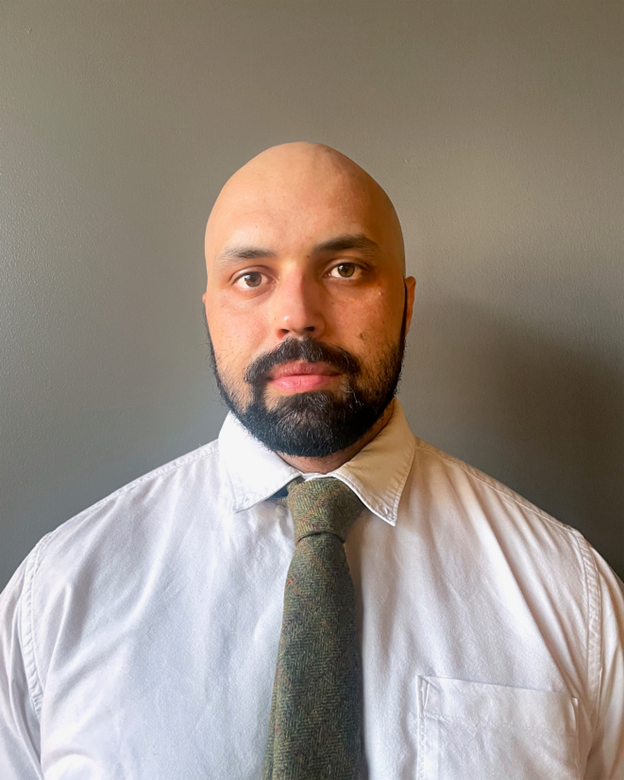}}]{Stefano Fenu}is an associate computational scientist at Argonne National Laboratory. He received his Ph.D. in Computer Science from Georgia Tech in 2023. His primary research interests are in machine learning, search, dimensionality reduction, and optimization, particularly with applications to power systems
\end{IEEEbiography}

\begin{IEEEbiography}[{\includegraphics[width=1in,height=1.25in,clip,keepaspectratio]{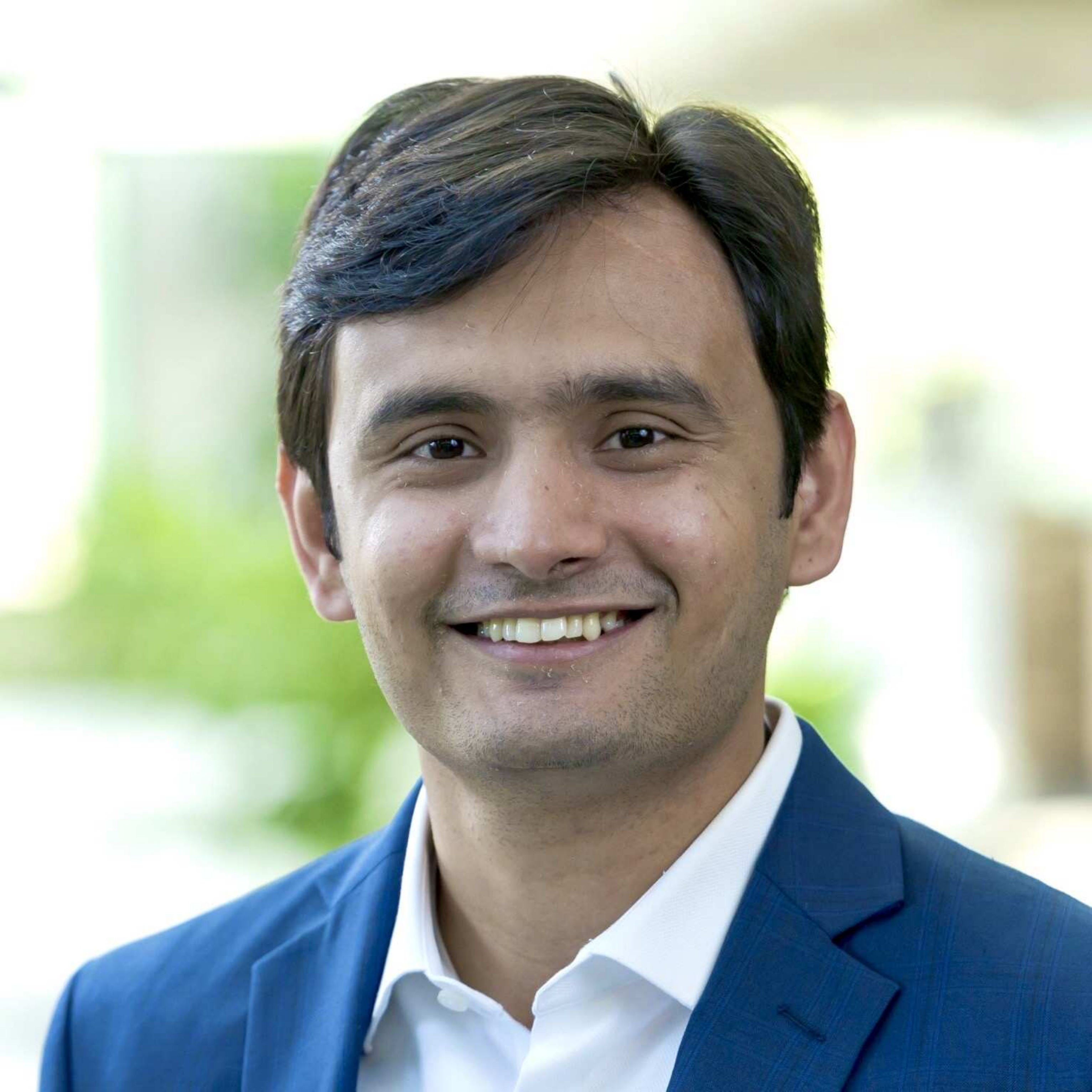}}]{Kumarsinh Jhala} (M'18) is a Principal Energy Systems Scientist in the Center for Energy, Environmental, and Economic Systems Analysis in the Energy Systems and Infrastructure Analysis Division at Argonne National Laboratory, Lemont, IL, USA. \\
Dr. Jhala received the Ph.D. and the M.S. degree in Electrical and Computer Engineering from Kansas State University, Manhattan, KS, USA in 2018 and 2015, respectively. He has performed extensive research on distribution system modeling and analysis. His area of expertise spans domains of transactive energy systems, cyber-physical security, machine learning, statistical  signal processing, game theory, stochastic modeling, optimization and control theory. He is an associate editor for the IEEE transactions on Smart Grid.
\end{IEEEbiography}

\begin{IEEEbiography}[{\includegraphics[width=1in,height=1.25in,clip,keepaspectratio]{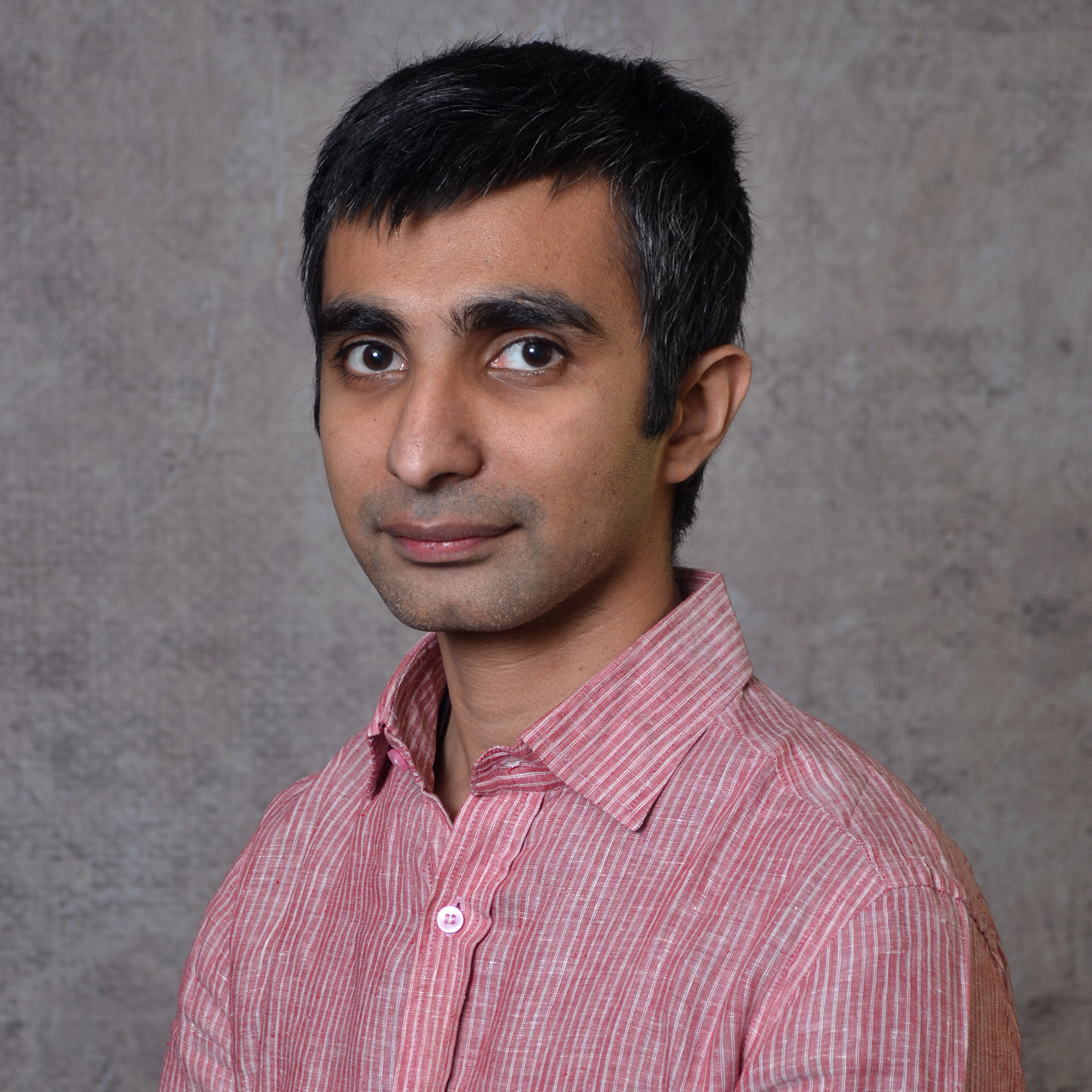}}]{Siby Plathottam} (M'15) is an Energy Systems Scientist in the Energy Systems and Infrastructure Analysis Division at Argonne National Laboratory, Lemont, IL, USA. \\
Dr. Plathottam received the Ph.D in Electrical Engineering from University of North Dakota, Grand Forks, ND, USA in 2017. He has worked on projects for the U.S. Department of Energy Office of Electricity, Buildings Technology Office, and Advanced Manufacturing office and has expertise in distributed energy resource modeling, applied machine learning, and open-source simulation software development.
\end{IEEEbiography}

\end{document}